\newtheorem{theorem}{Theorem}
\newtheorem{corollary}{Corollary}[theorem]
\newtheorem{lemma}{Lemma}
\newtheorem{assumption}{Assumption}
\newtheorem{proposition}{Proposition}
\DeclareMathOperator*{\argmin}{arg\,min}
\icmltitlerunning{Tesseract: Tensorised Actors for Multi-Agent Reinforcement Learning }
\author{}
\begin{document}	

\setlength{\abovedisplayskip}{4pt}
\setlength{\belowdisplayskip}{4pt}

\twocolumn[
\icmltitle{Tesseract: Tensorised Actors for Multi-Agent Reinforcement Learning}









\begin{icmlauthorlist}
\icmlauthor{Anuj Mahajan}{ox}
\icmlauthor{Mikayel Samvelyan}{uc}
\icmlauthor{Lei Mao}{nv}
\icmlauthor{Viktor Makoviychuk}{nv}
\icmlauthor{Animesh Garg}{nv}
\icmlauthor{Jean Kossaifi}{nv}
\icmlauthor{Shimon Whiteson}{ox}
\icmlauthor{Yuke Zhu}{nv}
\icmlauthor{Animashree Anandkumar}{nv}
\end{icmlauthorlist}

\icmlaffiliation{ox}{University of Oxford}
\icmlaffiliation{nv}{NVIDIA}
\icmlaffiliation{uc}{University College London}

\icmlcorrespondingauthor{Anuj Mahajan}{anuj.mahajan@cs.ox.ac.uk}

\icmlkeywords{Machine Learning, ICML}

\vskip 0.3in
]



\printAffiliationsAndNotice{}  

\begin{abstract}
Reinforcement Learning in large action spaces is a challenging problem. Cooperative multi-agent reinforcement learning (MARL) exacerbates matters by imposing various constraints on communication and observability.
In this work, we consider the fundamental hurdle affecting both value-based and policy-gradient approaches: an exponential blowup of the action space with the number of agents. For value-based methods, it poses challenges in accurately representing the optimal value function. For policy gradient methods, it makes training the critic difficult and exacerbates the problem of the \emph{lagging} critic. We show that from a learning theory perspective, both problems can be addressed by accurately representing the associated action-value function with a low-complexity  hypothesis class. This requires accurately modelling the agent interactions in a sample efficient way. To this end, we propose a novel tensorised formulation of the Bellman equation. This gives rise to our method \textsc{Tesseract}, which views the $Q$-function as a tensor whose modes correspond to the action spaces of different agents. Algorithms derived from \textsc{Tesseract} decompose the $Q$-tensor across agents and utilise low-rank tensor approximations to model  agent interactions relevant to the task.  We provide PAC analysis for \textsc{Tesseract}-based algorithms and highlight their relevance to the class of rich observation MDPs. Empirical results in different domains confirm \textsc{Tesseract}'s gains in sample efficiency predicted by the theory.
\end{abstract}
	
\section{Introduction}
\label{intro}
Many real-world problems, such as swarm robotics and autonomous vehicles, can be formulated as multi-agent reinforcement learning (MARL) \cite{bucsoniu2010multi} problems. MARL introduces several new challenges that do not arise in single-agent reinforcement learning (RL), including  exponential growth of the action space in the number of agents. This affects multiple aspects of learning, such as credit assignment~\cite{foerster2018counterfactual}, gradient variance~\cite{lowe2017multi} and exploration~\cite{mahajan2019maven}. In addition, while the agents can typically be trained in a centralised manner, practical constraints on observability and communication after deployment imply that decision making must be decentralised, yielding the extensively studied setting of centralised training with decentralised execution (CTDE).

Recent work in CTDE-MARL can be broadly classified into value-based methods and  actor-critic methods. Value-based methods \cite{sunehag_value-decomposition_2017,rashid2018qmix, son2019qtran, wang2020qplex, yao2019smix} typically enforce decentralisability by modelling the joint action $Q$-value such that the argmax over the joint action space can be tractably computed by local maximisation of per-agent utilities. However,  constraining the representation of the $Q$-function can interfere with exploration, yielding provably suboptimal solutions \cite{mahajan2019maven}. Actor-critic methods \cite{lowe2017multi, foerster2018counterfactual,wei2018multiagent} typically use a centralised critic to estimate the gradient for a set of decentralised policies. In principle, actor-critic methods can satisfy CTDE without incurring suboptimality, but in practice their performance is limited by the accuracy of the critic, which is hard to learn given exponentially growing action spaces. This can exacerbate the problem of the \textit{lagging} critic \cite{kondathesis}. Moreover, unlike the single-agent setting, this problem cannot be fixed by increasing the critic's learning rate and number of training iterations. Similar to these approaches, an exponential blowup in the action space also makes it difficult to choose the appropriate class of models which strike the correct balance between expressibility and learnability for the given task.

In this work, we present new theoretical results that show how the aforementioned approaches can be improved such that they accurately represent the joint action-value function whilst keeping the complexity of the underlying hypothesis class low. This translates to accurate, sample efficient modelling of long-term agent interactions.
 
In particular, we propose \textsc{Tesseract} (derived from "Tensorised Actors"), a new framework that leverages tensors for MARL. Tensors are high dimensional analogues of matrices that offer rich insights into representing and transforming data.  The main idea of \textsc{Tesseract} is to view the output of a joint $Q$-function as a tensor whose modes correspond to the actions of the different agents. We thus formulate the Tensorised Bellman equation, which offers a novel perspective on the underlying structure of a multi-agent problem. In addition, it enables the derivation of algorithms that decompose the $Q$-tensor across agents and utilise low rank approximations to model relevant agent interactions.
 
 Many real-world tasks (e.g., robot navigation) involve high dimensional observations but can be completely described by a low dimensional feature vector (e.g., a 2D map suffices for navigation). For value-based \textsc{Tesseract} methods, maintaining a tensor approximation with rank  matching the intrinsic task dimensionality\footnote{We define intrinsic task dimensionality (ITD) as the minimum number of dimensions required to describe an environment} helps learn a compact approximation of the true $Q$-function (alternatively MDP-dynamics for model based methods).
 In this way, we can avoid the suboptimality of the learnt policy while remaining sample efficient. Similarly, for actor-critic methods, \textsc{Tesseract} reduces the critic's learning complexity while retaining its accuracy, thereby mitigating the lagging critic problem. Thus, \textsc{Tesseract} offers a natural spectrum for trading off accuracy with computational/sample complexity. 

To gain insight into how tensor decomposition helps improve sample efficiency for MARL, we provide theoretical results for model-based \textsc{Tesseract} algorithms and show that the underlying joint transition and reward functions can be efficiently recovered under a PAC framework (in samples polynomial in accuracy and confidence parameters). 
We also introduce a tensor-based framework for CTDE-MARL that opens new possibilities for developing efficient classes of algorithms. Finally, we explore the relevance of our framework to rich observation MDPs.
Our main contributions are:
\begin{enumerate}
    \item A novel tensorised form of the Bellman equation;
    \item \textsc{Tesseract}, a method to factorise the action-value function based on tensor decomposition, which can be used for any factored action space;
    \item PAC analysis and error bounds for model based \textsc{Tesseract} that show an exponential gain in sample efficiency of $O(|U|^{n/2})$; and
    \item  Empirical results illustrating the advantage of \textsc{Tesseract} over other methods and detailed techniques for making tensor decomposition work for deep MARL.
\end{enumerate}
\section{Background}
\label{background}

\paragraph{Cooperative MARL settings}
In the most general setting, a fully cooperative multi-agent task can be modelled as a multi-agent partially observable MDP  (M-POMDP)~\cite{messias2011mpomdp}. An M-POMDP is formally defined as a tuple $G=\left\langle S,U,P,r,Z,O,n,\gamma\right\rangle$. 
$S$ is the state space of the environment. At each time step $t$, every agent $ i \in \mathcal{A} \equiv \{1,...,n\}$ chooses an action $u^i \in U$ which forms the joint action $\mathbf{u}\in\mathbf{U}\equiv U^n$.
$P(s'|s,\mathbf{u}):S\times\mathbf{U}\times S\rightarrow [0,1]$ is the state transition function. $r(s,\mathbf{u}):S \times \mathbf{U} \rightarrow [0,1]$ is the reward function shared by all agents and $\gamma \in [0,1)$ is the discount factor. 
\begin{wrapfigure}{r}{0.6\linewidth}
\centering
\includegraphics[width=\linewidth]{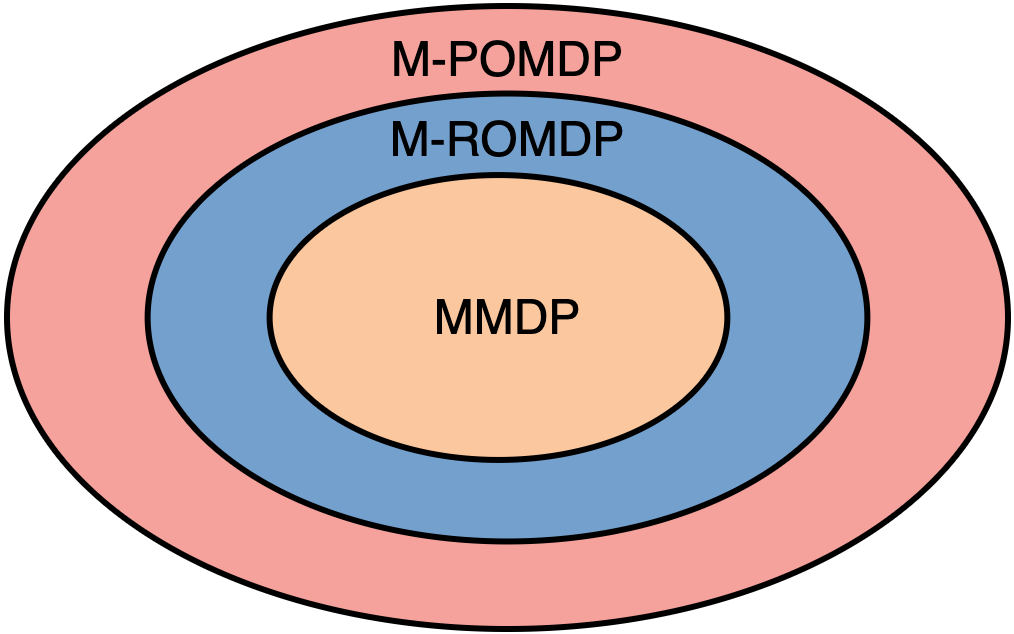}
\caption{Different settings in MARL \label{settings_marl}}
\end{wrapfigure}
An M-POMDP is \textit{partially observable}
\citep{kaelbling1998planning}: each agent does not have access to the full state and instead samples observations $z\in Z$ according to observation distribution $O(s):S\rightarrow \mathcal{P}(Z)$.
The action-observation history for an agent $i$ is $\tau^i\in T\equiv(Z\times U)^*$.
We use $u^{-i}$ to denote the action of all the agents other than $i$ and similarly for the policies $\pi^{-i}$. Settings where the agents cannot exchange their action-observation histories with others and must condition their policy solely on local trajectories, $\pi^i(u^i|\tau^i):T\times U\rightarrow [0,1]$, are referred to as a decentralised partially observable MDP (Dec-POMDP)~\citep{oliehoek_concise_2016}. 
When the observations have additional structure, namely the joint observation space is partitioned w.r.t.\ $S$, i.e., $\forall s_1,s_2 \in S \land {z} \in Z, P({z}|s_1)>0 \land s_1\neq s_2 \implies P({z}|s_2)=0$, we classify the problem as a multi-agent richly observed MDP (M-ROMDP)~\citep{azizzadenesheli2016reinforcement}. For both M-POMDP and M-ROMDP, we assume $|Z|>>|S|$, thus for this work, we assume a setting with no information loss due to observation but instead, redundancy across different observation dimensions. Such is the case for many real world tasks like 2D robot navigation using observation data from different sensors.
Finally, when the observation function is a bijective map $O: S\to Z$, we refer to the scenario as a multi-agent MDP (MMDP)~\cite{boutilier1996planning}, which can simply be denoted by the tuple : $\left\langle S,U,P,r,n,\gamma\right\rangle$. \cref{settings_marl} gives the relation between different scenarios for the cooperative setting. For ease of exposition, we present our theoretical results for the MMDP case, though they can easily be extended to other cases by incurring additional sample complexity. 

The joint \textit{action-value function} given a policy $\pi$ is defined as: $Q^\pi(s_t, \mathbf{u}_t)=\mathbb{E}_{s_{t+1:\infty},\mathbf{u}_{t+1:\infty}} \left[\sum^{\infty}_{k=0}\gamma^kr_{t+k}|s_t,\mathbf{u}_t\right]$.
The goal is to find the optimal policy $\pi^{*}$ corresponding to the optimal action value function $Q^*$. For the special learning scenario called Centralised Training with Decentralised Execution (CTDE), the learning algorithm has access to the action-observation histories of all agents and the full state during training phase. However, each agent can only condition on its own local action-observation history $\tau^i$  during the decentralised execution phase.

\paragraph{Reinforcement Learning Methods\label{subsec: rl}}
Both value-based and actor-critic methods for reinforcement learning (RL) rely on an estimator for the action-value function $Q^\pi$ given a target policy $\pi$. $Q^\pi$ satisfies the (scalar)-Bellman expectation equation:
$ Q^{\pi}(s, \mathbf{u}) = r(s, \mathbf{u})+ \gamma\mathbb{E}_{s',\mathbf{u}'}[Q^{\pi}(s', \mathbf{u}')],$ which can equivalently be written in vectorised form as:
\begin{align}
\label{eq:bell}
Q^{\pi} = R + \gamma P^{\pi}Q^{\pi}, 
\end{align}
where $R$ is the mean reward vector of size $S$, $P^{\pi}$ is the transition matrix. The operation on RHS 
$\mathcal{T}^{\pi}(\cdot) \triangleq R + \gamma P^{\pi}(\cdot)$ is the Bellman expectation operator for the policy $\pi$. In \cref{method} we generalise \cref{eq:bell} to a novel tensor form suitable for high-dimensional and multi-agent settings. For large state-action spaces function approximation is used to estimate $Q^\pi$. A parametrised approximation $Q^\phi$ is usually trained using the bootstrapped target objective derived using the samples from $\pi$ by minimising the mean squared temporal difference error:
$
    \mathbb{E}_\pi[(r(s, \mathbf{u})+ \gamma Q^{\phi}(s', \mathbf{u}')-Q^{\phi}(s, \mathbf{u}))^2].
$
Value based methods use the $Q^\pi$ estimate to derive a behaviour policy which is iteratively improved using the policy improvement theorem \cite{sutton2011reinforcement}. Actor-critic methods seek to maximise the mean expected payoff of a policy $\pi_\theta$ given by $\mathcal{J}_\theta=\int_{S} \rho^\pi(s)\int_{\mathbf{U}}\pi_\theta(\mathbf{u|s})Q^{\pi}(s,\mathbf{u}) d\mathbf{u}ds$ using gradient ascent on a suitable class of stochastic policies parametrised by $\theta$, where $\rho^\pi(s)$ is the stationary distribution over the states. Updating the policy parameters in the direction of the gradient leads to policy improvement. The gradient of the above objective is $\nabla \mathcal{J}_\theta = \int_{S} \rho^\pi(s)\int_{\mathbf{U}}\nabla\pi_\theta(\mathbf{u|s})Q^{\pi}(s,\mathbf{u}) d\mathbf{u}ds$ \cite{sutton2000policy}. An approximate action-value function based critic $Q^\phi$ is used when estimating the gradient as we do not have access to the true $Q$-function. Since the critic is learnt using finite number of samples, it may deviate from the true $Q$-function, potentially causing incorrect policy updates; this is called the \textit{lagging critic} problem. The problem is exacerbated in multi-agent setting where state-action spaces are very large. 
\paragraph{Tensor Decomposition}
Tensors are high dimensional analogues of matrices and tensor methods generalize matrix algebraic operations to higher orders. Tensor decomposition, in particular, generalizes the concept of low-rank matrix factorization. 
In the rest of this paper, we use $\hat{\cdot}$ to represent tensors.
Formally, an order $n$ tensor $\hat{T}$ has $n$ index sets ${I}_j,\forall j \in\{1..n\}$ and has elements $T(e), \forall e \in \times_{\mathcal{I}} {I}_j $ taking values in a given set $\mathcal{S}$, where $\times$ is the set cross product and we denote the set of index sets by $\mathcal{I}$. Each dimension $\{1..n\}$ is also called a mode.
\begin{figure}[h]
\centering
\includegraphics[width=0.8\linewidth]{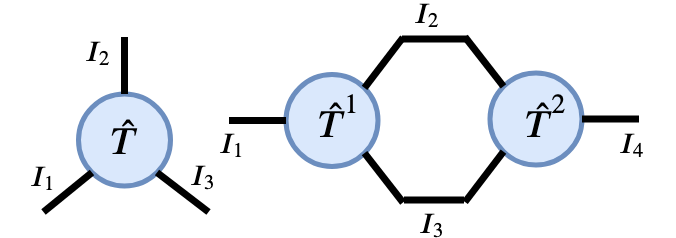}
\caption{Left: Tensor diagram for an order $3$ tensor $\hat T$. Right: Contraction between $\hat T^1$,$\hat T^2$ on common index sets $I_2,I_3$. \label{fig:tdia}}
\end{figure}
An elegant way of representing tensors and associated operations is via tensor diagrams as shown in \cref{fig:tdia}. 
Tensor contraction generalizes the concept of matrix with matrix multiplication. 
For any two tensors $\hat{T}^1$ and $\hat{T}^2$ with $\mathcal{I}_{\cap} = \mathcal{I}^1 \cap \mathcal{I}^2$ we define the contraction operation as $\hat T= \hat{T}^1{\odot} \hat{T}^2$ with $ \hat{T}(e_1,e_2) = \sum_{e\in \times_{\mathcal{I}_{\cap}} {I}_j } \hat T^1(e_1,e)\cdot\hat T^2(e_2,e), e_i \in \times_{\mathcal{I}^i\setminus \mathcal{I}_{\cap}} {I}_j$. 
The contraction operation is associative and can be extended to an arbitrary number of tensors. 
Using this building block, we can define tensor decompositions, which factorizes a (low-rank) tensor in a compact form. 
This can be done with various decompositions~\cite{kolda2009tensor}, such as Tucker, Tensor-Train (also known as Matrix-Product-State), or CP (for Canonical-Polyadic). In this paper, we focus on the latter, which we briefly introduce here.
Just as a matrix can be factored as a sum of rank-$1$ matrices (each being an outer product of vectors),  a tensor can be factored as a sum of rank-1 tensors, the latter being an outer product of vectors. The number of vectors in the outer product is equal to the rank of the tensor, and the number of terms in the sum is called the \emph{rank of the decomposition} (sometimes also called CP-rank). 
Formally, a tensor $\hat T$ can be factored using a (rank--$k$) CP decomposition into a sum of $k$ vector outer products (denoted by $\otimes$), as, 
\begin{align}
\label{CPD}
\hat T=\sum_{r=1}^k w_r\otimes^n u_r^i ,i \in \{1..n\},||u_r^i||_2 =1.
\end{align}	
\section{Methodology}
\label{method}
\subsection{Tensorised Bellman equation}

In this section, we provide the basic framework for Tesseract. We focus here on the discrete action space. The extension for continuous actions is similar and is deferred to \cref{app:cenv} for clarity of exposition.
\begin{proposition}
Any real-valued function $f$ of $n$ arguments $(x_1..x_n)$ each taking values in a finite set $x_i\in \mathcal{D}_i$ can be represented as a tensor $\hat f$ with modes corresponding to the domain sets $\mathcal{D}_i$ and entries $\hat f(x_1..x_n) = f(x_1..x_n)$.  
\end{proposition}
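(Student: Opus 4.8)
The plan is to observe that this is a direct unpacking of the definition of a tensor given in \cref{background}: an order-$n$ tensor is specified by $n$ finite index sets together with an assignment of a value from some set $\mathcal{S}$ to each element of the Cartesian product of those index sets. So it suffices to exhibit such data canonically from $f$ and check it satisfies that definition.

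First I would take the index sets to be the domain sets themselves, $I_i \triangleq \mathcal{D}_i$ for each $i \in \{1,\dots,n\}$, which are finite by hypothesis, and take the value set to be $\mathcal{S} = \mathbb{R}$ since $f$ is real-valued. Then I would define $\hat f$ to be the order-$n$ tensor whose entry at $e = (x_1,\dots,x_n) \in \times_{i} I_i = \mathcal{D}_1 \times \cdots \times \mathcal{D}_n$ is $\hat f(e) \triangleq f(x_1,\dots,x_n)$. This is well defined because $f$ assigns exactly one real number to each such tuple, and by construction $\hat f$ has the required modes ($\mathcal{D}_1,\dots,\mathcal{D}_n$) and the required entries, so it meets the stated definition of a tensor.

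Next I would note, for completeness, that the correspondence $f \mapsto \hat f$ is a bijection between real-valued functions on $\mathcal{D}_1\times\cdots\times\mathcal{D}_n$ and order-$n$ real tensors with modes $\mathcal{D}_1,\dots,\mathcal{D}_n$: it is injective because two functions with the same associated tensor agree pointwise on every tuple and hence coincide, and surjective because any such tensor $\hat T$ conversely induces the function $(x_1,\dots,x_n)\mapsto \hat T(x_1,\dots,x_n)$. If one prefers index sets in the standard form $\{1,\dots,|\mathcal{D}_i|\}$, one simply precomposes each mode with an enumeration (bijection) $\mathcal{D}_i \to \{1,\dots,|\mathcal{D}_i|\}$; the resulting tensor is the same up to this relabelling of modes.

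There is essentially no obstacle here: the statement is a definitional bridge, and the only point requiring care is confirming that the notion of tensor adopted in the paper genuinely permits arbitrary finite index sets (it does, per the definition that $\hat T$ \emph{has $n$ index sets $I_j$ and elements $T(e)$ taking values in a given set $\mathcal{S}$}), so that no enumeration step is logically necessary. The role of this proposition in what follows is to license viewing the $Q$-function, the reward function, and the transition kernel (for a fixed state) as tensors indexed by the agents' action sets $U$, which is precisely the viewpoint underlying the Tensorised Bellman equation.
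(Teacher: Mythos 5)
Your proof is correct and is exactly the intended (definitional) argument: the paper states this proposition without proof, treating it as an immediate consequence of its definition of an order-$n$ tensor as an assignment of values over the cross product of $n$ index sets. Identifying the index sets with the domains $\mathcal{D}_i$ and the entries with the function values, as you do, is all that is needed; the bijection and relabelling remarks are fine but not required.
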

Given a multi-agent problem $G=\left\langle S,U,P,r,Z,O,n,\gamma\right\rangle$, let $\mathcal{Q} \triangleq \{Q: S\times U^n\to \mathbb{R}\}$ be the set of real-valued functions on the state-action space. We are interested in the \emph{curried }\cite{barendregt1984introduction} form $Q: S\to U^n\to \mathbb{R},Q\in \mathcal{Q}$ so that $Q(s)$ is an order $n$ tensor (We use functions and tensors interchangeably where it is clear from context). Algorithms in Tesseract operate directly on the curried form and preserve the structure implicit in the output tensor. (Currying in the context of tensors implies fixing the value of some index. Thus, Tesseract-based methods keep action indices free and fix only state-dependent indices.)

We are now ready to present the tensorised form of the Bellman equation shown in \cref{eq:bell}. \cref{fig:tbell} gives the equation where $\hat I$ is the identity tensor of size $|S|\times|S|\times|S|$. The dependence of the action-value tensor $\hat Q^\pi$ and the policy tensor $\hat U^\pi$ on the policy is denoted by superscripts $\pi$. The novel \textbf{Tensorised Bellman equation} provides a theoretically justified foundation for the approximation of the joint $Q$-function, and the subsequent analysis (Theorems 1-3) for learning using this approximation.

\begin{figure}[h]
\centering
\includegraphics[width=\linewidth]{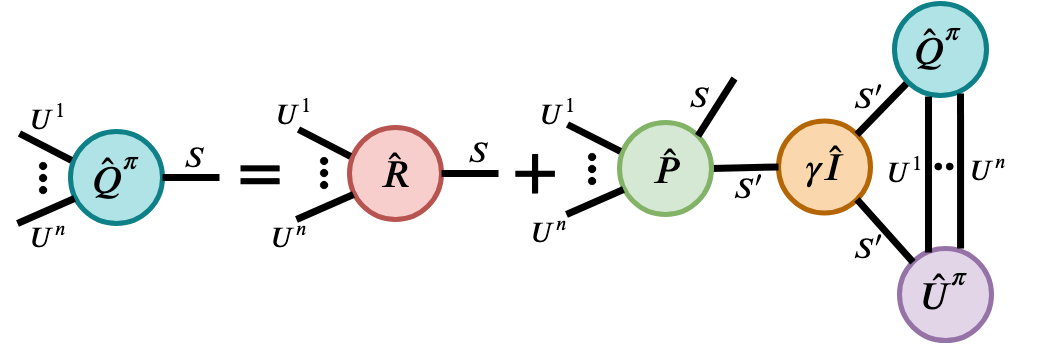}
\caption{\textbf{Tensorised Bellman Equation} for $n$ agents. There is an edge for each agent $i \in \mathcal{A}$ in the corresponding nodes $\hat Q^\pi,\hat U^\pi, \hat R, \hat P$ with the index set $U^i$.\label{fig:tbell}}
\end{figure}

\subsection{\textsc{Tesseract} Algorithms}
\label{subsec:TessAlgos}
\captionsetup[algorithm]{format=hang,singlelinecheck=false}


For any $k \in \mathbb{N}$ let $\mathcal{Q}_k \triangleq \{Q: Q\in \mathcal{Q} \land rank(Q(\cdot, s)) \leq k, \forall s \in S\}$. Given any policy $\pi$ we are interested in projecting $Q^\pi$ to $\mathcal{Q}_k$ using the projection operator $\Pi_k(\cdot) = \argmin_{Q \in \mathcal{Q}_k} ||\cdot-Q||_{\pi,F}$. where $||X||_{\pi,F} \triangleq \mathbb{E}_{s\sim\rho^\pi(s)}[||X(s)||_{F}]$ is the weighted Frobenius norm w.r.t.\ policy visitation over states. Thus a simple planning based algorithm for rank $k$ \textsc{Tesseract} would involve starting with an arbitrary $Q_0$ and successively applying the Bellman operator $\mathcal{T}^{\pi}$ and the projection operator $\Pi_k$ so that $Q_{t+1} = \Pi_k\mathcal{T}^{\pi}Q_t$. 

As we show in \cref{rankbQ}, constraining the underlying tensors for dynamics and rewards ($\hat P, \hat R$) is sufficient to bound the CP-rank of $\hat Q$. From this insight, we propose a model-based RL version for \textsc{Tesseract} in \cref{alg:model-based}. The algorithm proceeds by estimating the underlying MDP dynamics using the sampled trajectories obtained by executing the behaviour policy $\pi = (\pi^i)_1^n$ (factorisable across agents) satisfying \cref{thm:debound}. Specifically, we use a rank $k$ approximate CP-Decomposition to calculate the model dynamics $R, P$ as we show in \cref{analysis}. Next $\pi$ is evaluated using the estimated dynamics, which is followed by policy improvement, \cref{alg:model-based} gives the pseudocode for the model-based setting. The termination and policy improvement decisions in \cref{alg:model-based} admit a wide range of choices used in practice in the RL community. Example choices for internal iterations which broadly fall under approximate policy iteration include: 1) Fixing the number of applications of Bellman operator 2) Using norm of difference between consecutive Q estimates etc., similarly for policy improvement several options can be used like $\epsilon$-greedy (for Q derived policy), policy gradients (parametrized policy)~\cite{sutton2011reinforcement}

\begin{algorithm}[h!]
	\caption{Model-based Tesseract\label{alg:model-based}}
	\begin{algorithmic}[1]
	    \STATE \mbox{Initialise rank $k$, $\pi = (\pi^i)_1^n$  and $\hat Q$: \cref{thm:debound}}
		\STATE \mbox{Initialise model parameters  $\hat P,\hat R$}
		\STATE Learning rate $\leftarrow \alpha$,$\mathcal{D} \leftarrow \left\{ \right\}$ 
		\FOR{each episodic iteration i}
		\STATE Do episode rollout $\tau_i = \left\{(s_t,\mathbf{u}_t,r_t,s_{t+1})_{0}^L \right\}$ using $\pi$
		\STATE $\mathcal{D} \leftarrow \mathcal{D}\cup\left\{\tau_i \right\}$
		\STATE Update $\hat P,\hat R$ using CP-Decomposition on moments from $\mathcal{D}$ (\cref{thm:debound})
		\FOR{each internal iteration j}
		\STATE $\hat Q \gets \mathcal{T}^\pi \hat Q$
		\ENDFOR
		\STATE Improve $\pi$ using $\hat Q$
		\ENDFOR
		\STATE Return $\pi, \hat Q$
	\end{algorithmic}
\end{algorithm}

For large state spaces where storage and planning using model parameters is computationally difficult (they are $\mathcal{O}(kn|U||S|^2)$ in number), 
we propose a model-free approach using a deep network where the rank constraint on the $Q$-function is directly embedded into the network architecture. \cref{fig:tnet} gives the general network architecture for this approach and \cref{alg:model-free} the associated pseudo-code. Each agent in \cref{fig:tnet} has a policy network parameterized by $\theta$ which is used to take actions in a decentralised manner. 
\begin{figure}
    \centering
    \includegraphics[width=\linewidth]{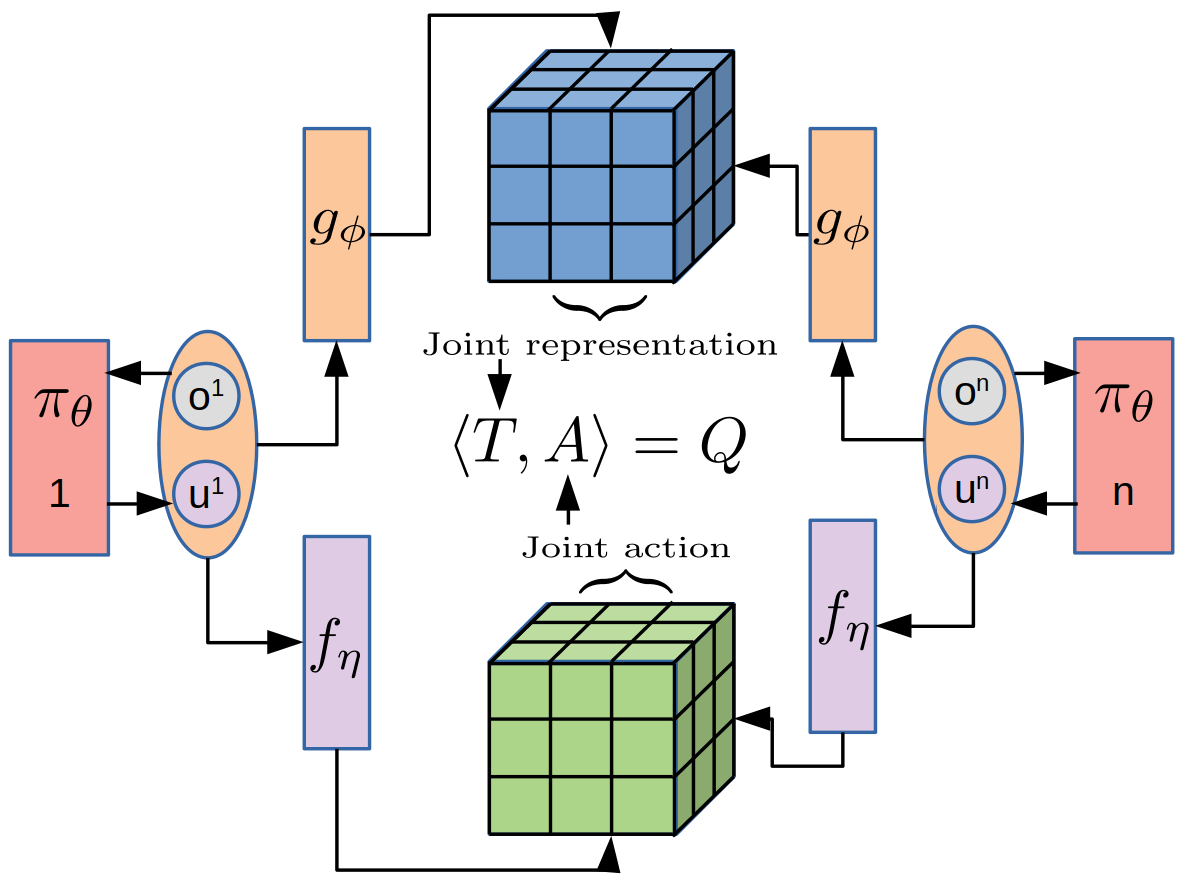}
    \caption{Tesseract architecture \label{fig:tnet}}
\end{figure}   
The observations of the individual agents along with the actions are fed through representation function $g_\phi$ whose output is a set of $k$ unit vectors of dimensionality $|U|$ corresponding to each rank. 
The output $g_{\phi,r}(s^i)$ corresponding to each agent $i$ for factor $r$ can be seen as an action-wise contribution to the joint utility from the agent corresponding to that factor. The joint utility here is a product over individual agent utilities. For partially observable settings, an additional RNN layer can be used to summarise agent trajectories. The joint action-value estimate of the tensor $\hat Q(s)$ by the centralized critic is: 
\begin{align}
\label{eq:cpa}
\hat Q(s) \approx T = \sum_{r=1}^k w_r\otimes^n g_{\phi,r}(s^i) ,i 
\in \{1..n\}, \vspace{-5pt}
\end{align}
where the weights $w_r$ are learnable parameters exclusive to the centralized learner. In the case of value based methods where the policy is implicitly derived from utilities, the policy parameters $\theta$ are merged with $\phi$. The network architecture is agnostic to the type of the action space (discrete/continuous) and the action-value corresponding to a particular joint-action $(u^1..u^n)$ is the inner product $\langle T, A \rangle$ where $A = \otimes^n u^i$ (This reduces to indexing using joint action in \cref{eq:cpa} for discrete spaces). More representational capacity can be added to the network by creating an abstract representation for actions using $f_\eta$, which can be any arbitrary monotonic function (parametrised by $\eta$) of vector output of size $m \geq |U|$ and preserves relative order of utilities across actions; this ensures that the optimal policy is learnt as long as it belongs to the hypothesis space. 
In this case $A = \otimes^n f_\eta(u^i)$ and the agents also carry a copy of $f_\eta$ during the execution phase. Furthermore, the inner product $\langle T, A \rangle$ can be computed efficiently using the property $$\langle T, A \rangle = \sum_{r=1}^k w_r\prod_1^n \langle f_\eta(u^i)g_{\phi,r}(s^i) \rangle ,i \in \{1..n\}$$ which is $O(nkm)$ whereas a naive approach involving computation of the tensors first would be $O(km^n)$. Training the Tesseract-based $Q$-network involves minimising the squared TD loss \cite{sutton2011reinforcement}:
\begin{align}
\mathcal{L}_{TD}(\phi,\eta) = \mathbb{E}_{\pi}[(&Q(\mathbf{u}_t,s_t; \phi,\eta)\\- &[r(\mathbf{u}_t,s_t)+\gamma Q(\mathbf{u}_{t+1},s_{t+1}; \phi^-,\eta^-)])^2],
\end{align}
where $\phi^-,\eta^-$ are target parameters. Policy updates involve gradient ascent w.r.t.\ to the policy parameters $\theta$ on the objective $\mathcal{J}_\theta=\int_{S} \rho^\pi(s)\int_{\mathbf{U}}\pi_\theta(\mathbf{u|s})Q^{\pi}(s,\mathbf{u}) d\mathbf{u}ds$. More sophisticated targets can be used to reduce the policy gradient variance \citep{greensmith2004variance, zhao2016regularized} and propagate rewards efficiently \citep{sutton1988learning}. Note that 
\cref{alg:model-free} does not require the individual-global maximisation principle \citep{son2019qtran} typically assumed by value-based MARL methods in the CTDE setting, as it is an actor-critic method. 
In general, any form of function approximation and compatible model-free approach can be interleaved with Tesseract by appropriate use of the projection function $\Pi_k$. 

\begin{algorithm}[h!]
    \caption{Model-free Tesseract\label{alg:model-free}}
    \begin{algorithmic}[1]
	    \STATE \mbox{Initialise rank $k$, parameter vectors $\theta, \phi, \eta$}
		\STATE Learning rate $\leftarrow \alpha$,$\mathcal{D} \leftarrow \left\{ \right\}$ 
		\FOR{each episodic iteration i}
		\STATE Do episode rollout $\tau_i = \left\{(s_t,\mathbf{u}_t,r_t,s_{t+1})_{0}^L \right\}$ using $\pi_\theta$
		\STATE $\mathcal{D} \leftarrow \mathcal{D}\cup\left\{\tau_i \right\}$
		\STATE Sample batch $\mathcal{B} \subseteq \mathcal{D}$.
		\STATE Compute empirical estimates for $\mathcal{L}_{TD}, \mathcal{J}_\theta$
		\STATE $\phi \leftarrow \phi - \alpha \nabla_\phi \mathcal{L}_{TD}$ (Rank $k$ projection step)
		\STATE $\eta \leftarrow \eta - \alpha \nabla_\eta \mathcal{L}_{TD}$ (Action representation update)
		\STATE $\theta \leftarrow \theta + \alpha \nabla_\theta \mathcal{J}_\theta$ (Policy update)
		\ENDFOR
		\STATE Return $\pi, \hat Q$
	\end{algorithmic}
\end{algorithm}



\subsection{Why Tesseract?}
\label{robo_nav_sec}
As discussed in \cref{intro}, $Q(s)$ is an object of prime interest in MARL. Value based methods  \cite{sunehag_value-decomposition_2017, rashid2018qmix, yao2019smix} that directly approximate the optimal action values $Q^*$ place constraints on $Q(s)$ such that it is a monotonic combination of agent utilities. In terms of Tesseract this directly translates to finding the best projection constraining $Q(s)$ to be rank one (\cref{discussion:vdn}). Similarly, the following result demonstrates containment of action-value functions representable by FQL\citep{chen2018factorized} which uses a learnt inner product to model pairwise agent interactions (\textbf{proof and additional results in \cref{discussion:vdn}}):.
\begin{proposition}
\label{prop:fql}
The set of joint Q-functions representable by FQL is a subset of that representable by \textsc{Tesseract}.
\end{proposition}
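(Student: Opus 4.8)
The plan is to exhibit, for every $Q$ in the FQL hypothesis class, an explicit CP decomposition of bounded rank, thereby placing it in $\mathcal{Q}_k$ for a finite $k$ and hence in the class representable by \textsc{Tesseract}. Recall that FQL parametrises the joint action-value as a sum of per-agent (unary) utilities together with pairwise interaction terms modelled by inner products of learnt per-agent embeddings, i.e.\ for a fixed state $s$, $Q^{\mathrm{FQL}}(s,\mathbf{u}) = \sum_{i=1}^{n} f_i(s,u^i) + \sum_{i<j}\langle e_i(s,u^i),\, e_j(s,u^j)\rangle$ with each $e_i(s,\cdot)\in\mathbb{R}^{d}$. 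Fixing $s$, this is an order-$n$ tensor over the $n$ copies of $U$, and by the definition of $\mathcal{Q}_k$ it suffices to bound its CP-rank uniformly in $s$.

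First I would handle the unary terms: for fixed $i$, the map $\mathbf{u}\mapsto f_i(s,u^i)$ is, as a tensor, the outer product of the vector $f_i(s,\cdot)\in\mathbb{R}^{|U|}$ placed in mode $i$ with the all-ones vector $\mathbf{1}\in\mathbb{R}^{|U|}$ in each of the other $n-1$ modes, hence rank at most $1$; summing over $i$ contributes at most $n$ rank-$1$ terms. Next, for a fixed pair $(i,j)$, expanding the inner product coordinate-wise gives $\langle e_i(s,u^i),e_j(s,u^j)\rangle = \sum_{\ell=1}^{d} e_i^\ell(s,u^i)\,e_j^\ell(s,u^j)$, and each summand is the rank-$1$ tensor with factor $e_i^\ell(s,\cdot)$ in mode $i$, $e_j^\ell(s,\cdot)$ in mode $j$, and $\mathbf{1}$ in the remaining $n-2$ modes; so each pairwise term has rank at most $d$, and there are $\binom{n}{2}$ of them. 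Adding the CP terms, $\mathrm{rank}(Q^{\mathrm{FQL}}(\cdot,s)) \le n + d\binom{n}{2}$ for every $s$, so $Q^{\mathrm{FQL}}\in\mathcal{Q}_k$ with $k = n + d\binom{n}{2}$. To match the normalised form of \cref{CPD}, I would rescale each rank-$1$ factor to unit $\ell_2$ norm and absorb the product of norms, with sign, into the scalar weight $w_r$; the paper's CP form imposes no positivity constraint on $w_r$, so the (possibly negative) contributions from the inner-product terms are admissible. This establishes the claimed containment with an explicit rank bound.

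The main obstacle is not the counting above but verifying that the \textsc{Tesseract} architecture actually realises these decompositions under its structural constraints: the per-agent, per-rank outputs $g_{\phi,r}(s^i)$ are unit vectors of dimension $|U|$ (or, via the order-preserving action map $f_\eta$, of dimension $m\ge|U|$), so one must check that the all-ones pass-through factors and the coordinate vectors $e_i^\ell(s,\cdot)$ can be produced by such a network — they can, since $g_{\phi,r}$ is an arbitrary representation function over a finite domain and the weights $w_r$ are free global parameters — and that the reduction still goes through for continuous action spaces, where the relevant object is the contraction $\langle T,A\rangle$ with $A=\otimes^n f_\eta(u^i)$ rather than a literal tensor entry. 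I would also note in passing that the inclusion is strict, a separating example being a $Q$-tensor that encodes a genuine higher-than-pairwise interaction, and defer this, along with the analogous containment statements for VDN/QMIX, to \cref{discussion:vdn}.
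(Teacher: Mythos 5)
Your proof is correct and follows essentially the same route as the paper's: both exhibit an explicit CP decomposition of the FQL value function into $n + d\binom{n}{2}$ rank-one terms, with the unary utilities and the coordinate-wise expansion of each pairwise inner product placed in their respective modes and all-ones pass-through factors elsewhere (the paper packages this as a column-indexed map $v_i$, but the decomposition is identical). Your added remarks on normalising factors into the weights $w_r$ and on strictness of the inclusion match the paper's treatment and introduce no gap.
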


MAVEN \cite{mahajan2019maven} illustrates how rank $1$ projections can lead to insufficient exploration and provides a method to avoid suboptimality by using mutual information (MI) to learn a diverse set of rank $1$ projections that correspond to different joint behaviours. In Tesseract, this can simply be achieved by finding the best approximation constraining $Q(s)$ to be rank $k$. Moreover, the CP-decomposition problem, being a product form (\cref{CPD}), is well posed, whereas in \cite{mahajan2019maven} the problem form  is $\hat T=\sum_{r=1}^k w_r\oplus^n u_r^i ,i \in \{1..n\},||u_r^i||_2 =1$, which requires careful balancing of different factors $\{1..k\}$ using MI as otherwise all factors collapse to the same estimate. The above improvements are equally important for the critic in actor-critic frameworks. Note that \textsc{Tesseract} is complete
in the sense that every possible joint Q-function is representable by it given sufficient approximation rank. This follows as every possible Q-tensor can be expressed as linear combination of one-hot tensors (which form a basis for the set).

Many real world problems have high-dimensional observation spaces that are encapsulated in an underlying low dimensional latent space that governs the transition and reward dynamics \cite{azizzadenesheli2016reinforcement}. 
For example, in the case of robot navigation, the observation is high dimensional visual and sensory input but solving the underlying problem requires only knowing the 2D position. 
Standard RL algorithms that do not address modelling the latent structure in such problems typically incur poor performance and intractability. In \cref{analysis} we show how Tesseract can be leveraged for such scenarios. 
Finally, projection to a low rank offers a natural way of regularising the  approximate $Q$-functions and makes them easier to learn, which is important for making value function approximation amenable to multi-agent settings. Specifically for the case of actor-critic methods, this provides a natural way to make the critic learn more quickly. Additional discussion about using Tesseract for continuous action spaces can be found in \cref{app:cenv}.

\section{Analysis}

\label{analysis}
In this section we provide a PAC analysis of model-based Tesseract (\cref{alg:model-based}). We focus on the MMDP setting (\cref{background}) for the simplicity of notation and exposition; guidelines for other settings are provided in \cref{app:proofs}.  

The objective of the analysis is twofold: Firstly it provides concrete quantification of the sample efficiency gained by model-based policy evaluation. Secondly, it provides insights into how Tesseract can similarly reduce sample complexity for model-free methods. \textbf{Proofs for the results stated can be found in \cref{app:proofs}}. We begin with the assumptions used for the analysis:

\begin{assumption}
\label{a1}
For the given MMDP $G=\left\langle S,U,P,r,n,\gamma\right\rangle$, the reward tensor $\hat R(s),\forall s\in S$ has bounded rank $k_1\in \mathbb{N}$. 
\end{assumption}

Intuitively, a small $k_1$ in \cref{a1} implies that the reward is dependent only on a small number of intrinsic factors characterising the actions.

\begin{assumption}
\label{a2}
For the given MMDP $G=\left\langle S,U,P,r,n,\gamma\right\rangle$, the transition tensor $\hat P(s,s'),\forall s,s'\in S$ has bounded rank $k_2\in \mathbb{N}$.
\end{assumption}

Intuitively a small $k_2$ in \cref{a2} implies that only a small number of intrinsic factors characterising the actions lead to meaningful change in the joint state.
\cref{a1}{-2} always hold for a finite MMDP as CP-rank is upper bounded by $\Pi_{j=1}^n |U_j|$, where $U_j$ are the action sets.
\begin{assumption}
\label{a3}
The underlying MMDP is ergodic for any policy $\pi$ so that there is a stationary distribution $\rho^\pi$.
\end{assumption}

Next, we define coherence parameters, which are quantities of interest for our theoretical results: for reward decomposition $\hat R(s) = \sum_r w_{r,s}\otimes^n v_{r,i,s}$, let $\mu_{s} = \sqrt{n}\max_{i,r,j}|v_{r,i,s}(j)|$, $w_{s}^{\text{max}} = \max_{i,r}w_{r,s}$,$w_{s}^{\text{min}} = \min_{i,r}w_{r,s}$. Similarly define the corresponding quantities for $\mu_{s,s'}, w_{s,s'}^{\text{max}},w_{s,s'}^{\text{min}}$ for transition tensors $\hat P(s,s')$. A low coherence implies that the tensor's mass is evenly spread and helps bound the possibility of never seeing an entry with very high mass (large absolute value of an entry).

\begin{theorem}
\label{rankbQ}
For a finite MMDP the action-value tensor satisfies $rank(\hat Q^\pi(s))\leq k_1+k_2|S|,\forall s \in S, \forall \pi$.
\end{theorem}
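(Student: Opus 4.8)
The plan is to exploit the fact that, although $\hat Q^\pi(s)$ is an order-$n$ tensor over the joint action space, the Bellman expectation equation lets the joint action $\mathbf{u}$ enter $Q^\pi(s,\mathbf{u})$ \emph{only} through the one-step reward and the one-step transition kernel; everything after the first step has its action argument averaged out by $\pi$. Concretely, define the action-free value function $V^\pi(s') \triangleq \sum_{\mathbf{u}'}\pi(\mathbf{u}'|s')\,Q^\pi(s',\mathbf{u}')$, which is well defined since $\gamma<1$ and rewards are bounded (and under \cref{a3} can also be written via $\rho^\pi$). Unrolling the scalar Bellman equation one step gives, for every fixed $s$,
$$Q^\pi(s,\mathbf{u}) = r(s,\mathbf{u}) + \gamma\sum_{s'\in S} P(s'|s,\mathbf{u})\,V^\pi(s'),$$
and since $V^\pi(s')$ does not depend on $\mathbf{u}$, the curried/tensor form of this identity is
$$\hat Q^\pi(s) = \hat R(s) + \gamma\sum_{s'\in S} V^\pi(s')\,\hat P(s,s').$$

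Next I would invoke two elementary properties of CP-rank: subadditivity, $rank(\hat A+\hat B)\le rank(\hat A)+rank(\hat B)$, and the fact that scaling a tensor by a scalar does not increase its rank, $rank(c\hat A)\le rank(\hat A)$. Applying these to the displayed decomposition of $\hat Q^\pi(s)$, then bounding $rank(\hat R(s))\le k_1$ by \cref{a1} and $rank(\hat P(s,s'))\le k_2$ for each of the $|S|$ slices by \cref{a2}, yields
$$rank(\hat Q^\pi(s)) \le rank(\hat R(s)) + \sum_{s'\in S} rank\!\big(V^\pi(s')\,\hat P(s,s')\big) \le k_1 + |S|\,k_2,$$
which is the claim; it holds for all $s$ and all $\pi$ since \cref{a1,a2} are stated uniformly.

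There is no deep obstacle here: the only thing to be careful about is the index bookkeeping in the Tensorised Bellman equation, i.e.\ recognising that after the contraction with $\hat U^\pi$ and the sum over the next state, the sole uncontracted action modes left are those of the immediate reward tensor $\hat R(s)$ and of the one-step transition slices $\hat P(s,s')$, so $\hat Q^\pi(s)$ genuinely is a scalar-weighted sum of $1+|S|$ low-rank tensors. One should also resist the temptation to expand $\hat Q^\pi$ termwise as $\sum_{t\ge 0}\gamma^t (P^\pi)^t\hat R$, since summing infinitely many rank-$(|S|k_2)$ tensors gives a vacuous bound — collapsing the geometric series before counting ranks is precisely what produces the single transition application above. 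For the non-MMDP settings the same argument goes through with $V^\pi$ replaced by the appropriate history-conditioned value, which is where the additional sample-complexity overhead noted in \cref{background} enters.
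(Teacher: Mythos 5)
Your proof is correct and follows essentially the same route as the paper: collapse the tail of the return into the action-free value $V^\pi(s')$, so that $\hat Q^\pi(s)$ becomes $\hat R(s)$ plus a scalar-weighted (linear) combination of the slices $\{\hat P(s,s')\}_{s'\in S}$, and then apply sub-additivity of CP-rank with \cref{a1,a2}. The paper's proof states this more tersely via the Tensorised Bellman equation, but the decomposition and the rank-counting step are identical to yours.
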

\begin{proof}
We first unroll the Tensor Bellman equation in \cref{fig:tbell}. The first term $\hat R$ has bounded rank $k_1$ by \cref{a1}. Next, each contraction term on the RHS is a linear combination of $\{\hat P(s,s')\}_{s'\in S}$ each of which has bounded rank $k_2$ (\cref{a2}). The result follows from the sub-additivity of CP-rank.
\end{proof}

\cref{rankbQ} implies that for approximations with enough factors, policy evaluation converges:

\begin{corollary}
\label{cor:suf_rank}
For all $k\geq k_1+k_2|S|$, the procedure $Q_{t+1}\leftarrow \Pi_{k}\mathcal{T}^{\pi}Q_{t}$ converges to $Q^\pi$ for all $Q_0,\pi$.
\end{corollary}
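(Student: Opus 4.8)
The plan is to show that once the rank budget satisfies $k \ge k_1 + k_2|S|$, the projection $\Pi_k$ acts as the identity on the image of $\mathcal{T}^\pi$, so that the procedure is literally exact value iteration, after which the corollary follows from the Banach fixed-point theorem.

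The key observation I would make is that the proof of \cref{rankbQ} never uses any property of $\hat Q^\pi$ beyond its appearing on the right-hand side of the Tensorised Bellman equation; the identical computation bounds $\mathrm{rank}\big((\mathcal{T}^\pi Q)(s)\big)$ for an \emph{arbitrary} $Q \in \mathcal{Q}$. Fixing a state $s$, the tensor $(\hat{\mathcal{T}}^\pi \hat Q)(s)$ is $\hat R(s)$ — of rank at most $k_1$ by \cref{a1} — plus $\gamma \sum_{s'\in S} c_{s'}\,\hat P(s,s',\cdot)$, where each $c_{s'} = \sum_{\mathbf u'}\hat U^\pi(s',\mathbf u')\hat Q(s',\mathbf u') \in \mathbb{R}$ is a scalar and each $\hat P(s,s',\cdot)$ has rank at most $k_2$ by \cref{a2}. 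Sub-additivity of CP-rank then gives $\mathcal{T}^\pi(\mathcal{Q}) \subseteq \mathcal{Q}_{k_1+k_2|S|}$.

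Consequently, for every $k \ge k_1 + k_2|S|$ and every $Q$ we have $\mathcal{T}^\pi Q \in \mathcal{Q}_{k_1+k_2|S|} \subseteq \mathcal{Q}_k$, so $\mathcal{T}^\pi Q$ is itself a minimiser (attaining value $0$) of $\|\mathcal{T}^\pi Q - Q'\|_{\pi,F}$ over $Q' \in \mathcal{Q}_k$; hence $\Pi_k \mathcal{T}^\pi Q = \mathcal{T}^\pi Q$. In particular the $\argmin$ is attained here, so the non-convexity of $\mathcal{Q}_k$ causes no well-posedness issue in this regime. Therefore, for any $Q_0$, the iterates satisfy $Q_1 = \Pi_k\mathcal{T}^\pi Q_0 = \mathcal{T}^\pi Q_0$ and, inductively, $Q_t = (\mathcal{T}^\pi)^t Q_0$ for all $t \ge 1$: the procedure is exact value iteration. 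Since $\mathcal{T}^\pi$ is a $\gamma$-contraction (in the sup-norm always, and — using the stationary $\rho^\pi$ granted by \cref{a3} together with a Jensen estimate — also in $\|\cdot\|_{\pi,F}$), the Banach fixed-point theorem yields convergence of $Q_t$ to the unique fixed point of $\mathcal{T}^\pi$, namely the solution $Q^\pi$ of the (tensorised) Bellman expectation equation; on the finite MMDP this holds in every norm. By \cref{rankbQ}, $Q^\pi \in \mathcal{Q}_k$, consistent with it being the limit.

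The one place I expect to need care — and the reason the short proof is not entirely vacuous — is resisting the tempting but incorrect shortcut of declaring $\Pi_k \mathcal{T}^\pi$ a contraction because it composes the $\gamma$-contraction $\mathcal{T}^\pi$ with a "non-expansive" projection: metric projection onto the non-convex low-rank variety $\mathcal{Q}_k$ is \emph{not} non-expansive in general (truncated SVD already fails this). The argument above bypasses that entirely by using the stronger, rank-specific fact that $\mathcal{T}^\pi$ outputs tensors of per-state rank at most $k_1 + k_2|S|$, which makes $\Pi_k$ the identity on $\mathrm{range}(\mathcal{T}^\pi)$.
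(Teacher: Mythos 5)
Your proof is correct and is essentially the paper's intended argument: the corollary is presented as an immediate consequence of \cref{rankbQ}, whose unrolling of the tensorised Bellman equation shows precisely what you exploit, namely that $(\mathcal{T}^\pi Q)(s) = \hat R(s) + \gamma\sum_{s'} c_{s'}\hat P(s,s')$ has CP-rank at most $k_1+k_2|S|$ for \emph{any} $Q$, so $\Pi_k$ is the identity on the range of $\mathcal{T}^\pi$ and the iteration reduces to exact value iteration, which converges by contraction. Your explicit handling of the non-convexity of $\mathcal{Q}_k$ (rejecting the non-expansive-projection shortcut) is a welcome clarification of a detail the paper leaves implicit, but it does not change the route.
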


\cref{cor:suf_rank} is especially useful for the case of M-POMDP and M-ROMDP with $|Z| >> |S|$, i.e., where the intrinsic state space dimensionality is small in comparison to the dimensionality of the observations (like robot navigation \cref{robo_nav_sec}). In these cases the Tensorised Bellman equation \cref{fig:tbell} can be augmented by padding the transition tensor $\hat P$ with the observation matrix and the lower bound in \cref{cor:suf_rank} can be improved using the intrinsic state dimensionality.

We next give a PAC result on the number of samples required to infer the reward and state transition dynamics for finite MDPs with high probability using sufficient approximate rank $k \geq k_1, k_2$: 

\begin{theorem}[Model based estimation of $\hat R, \hat P$ error bounds]
\label{thm:debound}
Given any $\epsilon>0, 1>\delta>0$, for a policy $\pi$ with the policy tensor satisfying $\pi(\mathbf{u}|s)\geq \Delta$, where
\begin{align}
\nonumber
\Delta = \max_s 
\frac{C_1 \mu_{s}^6 k^5 (w_{s}^{\text{max}})^4 \log(|U|)^4 \log(3k||R(s)||_{F}/\epsilon)}
{|U|^{n/2} (w_{s}^{\text{min}})^4}  
\end{align}
and $C_1$ is a problem dependent positive constant. There exists $N_0$ which is $O(|U|^{\frac{n}{2}})$ and polynomial in $\frac{1}{\delta},\frac{1}{\epsilon}, k$ and relevant spectral properties of the underlying MDP dynamics such that for samples $\geq N_0$, we can compute the estimates $\bar R(s), \bar P(s,s')$ such that w.p. $\geq 1-\delta$, $||\bar{R}(s)-\hat R(s)||_F\leq \epsilon, ||\bar{P}(s,s')-\hat P(s,s')||_F\leq \epsilon, \forall s,s' \in S$.
\end{theorem}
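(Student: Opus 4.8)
The plan is to reduce the recovery of each slice $\hat R(s)$ and each transition slice $\hat P(s,s')$ to a \emph{noisy low-rank tensor/matrix completion} problem, and then to control the random sampling that executing the behaviour policy $\pi$ induces on the entries of these tensors. First I would fix a state $s$ and unfold the order-$n$ reward tensor $\hat R(s)\in\mathbb{R}^{|U|\times\cdots\times|U|}$ along a balanced bipartition (or tripartition) of its $n$ modes into a near-square matrix (resp.\ order-$3$ tensor) of side length $|U|^{\lceil n/2\rceil}$ (resp.\ $\approx|U|^{n/3}$). Since folding modes together is merely a re-indexing, a rank-$k_1$ CP decomposition $\hat R(s)=\sum_r w_{r,s}\otimes_i v_{r,i,s}$ descends to a decomposition of the unfolding of rank $\le k_1\le k$ whose factor vectors are Kronecker products of the $v_{r,i,s}$; the same holds for $\hat P(s,s')$ with rank $\le k_2\le k$ by \cref{a2}. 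This is exactly where the subadditivity-of-CP-rank reasoning behind \cref{rankbQ} and \cref{a1}--\cref{a2} enter. I would then translate the coherence data: because $\|v_{r,i,s}\|_2=1$ and $\max_j|v_{r,i,s}(j)|\le\mu_s/\sqrt n$, Kronecker products of these vectors have a controlled $\ell_\infty/\ell_2$ ratio, so (after orthonormalising the factors, which costs a condition-number factor governed by $w_s^{\max}/w_s^{\min}$) the row and column spaces of the unfolded matrix are incoherent with a parameter bounded in terms of $\mu_s$.

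Next I would set up the observation model. Each time a trajectory of $\pi=(\pi^i)_1^n$ passes through $s$, it emits a joint action $\mathbf u\sim\prod_i\pi^i(\cdot|s)$ together with a reward of mean $r(s,\mathbf u)\in[0,1]$; equivalently, we observe a uniformly bounded, unbiased noisy copy of a single entry of the unfolded reward matrix, with the entry drawn from the product distribution $\pi(\cdot|s)$. The hypothesis $\pi(\mathbf u|s)\ge\Delta$ is precisely what makes this sampling distribution spread out enough that the $\pi$-weighted sampling operator, restricted to rank-$k$ incoherent matrices, is well-conditioned (a weighted-RIP / well-balanced-sampling condition). With this in hand I would invoke a noisy matrix-completion guarantee for alternating minimisation — the $\mu^6$, $k^5$ and $\log^4$ footprint matches the alternating-minimisation analyses (Jain--Netrapalli--Sanghavi / Jain--Oh) rather than a nuclear-norm bound — stating that from $|\Omega|\gtrsim D\,k\,\mu^{O(1)}(w^{\max}/w^{\min})^{O(1)}\log^{O(1)}D\cdot\log(\|\hat R(s)\|_F/\epsilon)$ observations of a $D\times D$ rank-$k$ matrix with $O(1)$-bounded noise, the returned estimate lies within $\epsilon$ in Frobenius norm with the required probability. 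Unfolding back yields $\bar R(s)$; with $D=|U|^{\lceil n/2\rceil}$ this is the claimed $N_0=O(|U|^{n/2})$, polynomial in $k$, $1/\epsilon$, and the relevant spectral quantities, with the dependence on $\delta$ (at most polynomial, in fact logarithmic) coming from boosting the completion routine's success probability. The phrase ``$\ge N_0$ samples'' then means ``$\ge N_0$ visits to $s$''.

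The transition tensors are handled identically: for fixed $s,s'$ the indicator $\mathbf 1[s_{t+1}=s']$ recorded on a visit to $s$ under joint action $\mathbf u$ is a $\{0,1\}$-valued unbiased noisy entry of the unfolding of $\hat P(s,s')$, whose rank is $\le k_2$ and whose coherence is controlled by $\mu_{s,s'},w_{s,s'}^{\max},w_{s,s'}^{\min}$, so the same completion bound delivers $\|\bar P(s,s')-\hat P(s,s')\|_F\le\epsilon$ with high probability. Finally I would assemble the global statement: \cref{a3} (ergodicity, hence a stationary $\rho^\pi$ and a finite mixing time) lets me lower-bound, via a concentration inequality for Markov chains, the number of visits to every $s\in S$ in terms of the trajectory length, $\min_s\rho^\pi(s)$, and the spectral gap — these are the ``relevant spectral properties of the underlying MDP dynamics'' — so that $O(|U|^{n/2})\cdot\mathrm{poly}$ trajectory steps furnish $\ge N_0$ visits to each state w.h.p.; a union bound over the $|S|$ reward slices and $|S|^2$ transition slices contributes only a $\log(|S|/\delta)$ factor, which is absorbed into the polynomial.

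I expect the main obstacle to be the second step fused with the completion invocation: verifying that the entry-sampling operator induced by the \emph{factorised}, possibly far-from-uniform policy $\pi$ (rather than uniform sampling) meets the incoherence / well-balancedness hypotheses of the completion theorem under exactly the stated lower bound $\Delta$, and carrying the coherence parameters $\mu_s$ and $w_s^{\max}/w_s^{\min}$ of the \emph{original} CP factors through the Kronecker unfolding so that the exponents in $N_0$ and $\Delta$ emerge as claimed ($\mu^6$, $k^5$, $(w^{\max}/w^{\min})^4$, $\log^4|U|$). Everything downstream — the Markov-chain visitation bound and the union bounds over $S$ and $S\times S$ — is routine by comparison.
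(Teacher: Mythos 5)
Your plan is sound and lands on the same sample-complexity skeleton as the paper (per-state low-rank recovery, a $|U|^{n/2}$-type completion bound, a mixing-time/visitation argument, and union bounds over $S$ and $S\times S$), but it takes a genuinely different route in two places. First, for $\hat R(s)$ the paper does not unfold: it applies the tensor-completion-with-missing-entries guarantee of \citet{jain2014provable} (their Algorithm~1 and Theorem~1.1) directly to the order-$n$ tensor (analysed explicitly for symmetric orthogonal $n=3$, with whitening invoked for the general case), which is where the $\mu_s^6 k^5 (w^{\max}/w^{\min})^4\log^4|U|$ footprint and the $|U|^{n/2}$ rate come from verbatim; it then boosts the per-state confidence to $1-\delta/3|S|$ by a median-of-estimates clustering trick rather than by rerunning the routine. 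Your unfold-to-a-$|U|^{\lceil n/2\rceil}$-square-matrix alternative plausibly recovers the same rate, but the coherence bookkeeping through the Kronecker factors is exactly the step you flag as hard, and it is the step the paper avoids by staying in tensor form. Second, for $\hat P(s,s')$ the paper does \emph{not} reuse completion at all: it exploits the fact that the entries are Bernoulli parameters, views each $\hat P(s,s')$ as a $k$-topic latent variable model, and recovers the factors by the method-of-moments spectral algorithm of \citet{anandkumar2012method} (their Theorem~3.1), converting factor-wise $\ell_2$ error to Frobenius error by the choice $\epsilon'=\epsilon/(7n^2k\mu_{s,s'}^2(w_{s,s'}^{\max})^2)$; your indicator-as-noisy-entry completion treatment is a legitimate alternative but is not what the paper does. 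Finally, note how the paper handles the non-uniform, factorised sampling induced by $\pi$: rather than verifying a weighted-RIP condition as you propose, it simply oversamples by a factor $1/\Delta$ ($2N_2/\Delta$ samples) so that the lower bound $\pi(\mathbf{u}|s)\ge\Delta$ guarantees enough effective coverage of every entry; this sidesteps the obstacle you correctly identify as the crux of your version, so if you pursue your route you should either prove the well-balancedness claim or adopt the same $1/\Delta$ reduction.
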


\cref{thm:debound} gives the relation between the order of the number of samples required to estimate dynamics and the tolerance for approximation. \cref{thm:debound} states that aside from allowing efficient PAC learning of the reward and transition dynamics of the multi-agent MDP, \cref{alg:model-based} requires only $O(|U|^{\frac{n}{2}})$ to do so, which is a vanishing fraction of $|U|^n$, the total number of joint actions in any given state. This also hints at why a tensor based approximation of the $Q$-function helps with sample efficiency. Methods that do not use the tensor structure typically use $O(|U|^n)$ samples. The bound is also useful for off-policy scenarios, where only the behaviour policy needs to satisfy the bound.
Given the result in \cref{thm:debound}, it is natural to ask what is the error associated with computing the action-values of a policy using the estimated transition and reward dynamics. We address this in our next result, but first we present a lemma bounding the total variation distance between the estimated and true transition distributions: 
\begin{lemma}
\label{tvbound}
For transition tensor estimates satisfying $||\bar{P}(s,s')-\hat P(s,s')||_F\leq \epsilon$, we have for any given state-action pair $(s,a)$, the distribution over the next states follows: $TV(P'(\cdot|s,a),P(\cdot|s,a))\leq \frac{1}{2}(|1-f|+f|S|\epsilon)$ where $\frac{1}{1+\epsilon|S|}\leq f \leq\frac{1}{1-\epsilon|S|}$, where $TV$ is the \textit{total variation} distance. Similarly for any policy $\pi$, $TV(\bar P_{\pi}(\cdot|s),P_{\pi}(\cdot|s)), TV(\bar P_{\pi}(s',a'|s),P_{\pi}(s',a'|s))\leq \frac{1}{2}(|1-f|+f|S|\epsilon)$ 
\end{lemma}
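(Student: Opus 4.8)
The plan is to push the hypothesis $\|\bar P(s,s')-\hat P(s,s')\|_F \le \epsilon$ down to a single tensor entry and then account for the fact that the estimated row $\bar P(\cdot\,|\,s,a)$ is only approximately normalised. First I would use that the absolute value of any entry of a tensor is at most its Frobenius norm, so for every joint action $a$ and every pair $s,s'$ we get the entrywise bound $|\bar P(s'|s,a) - P(s'|s,a)| \le \epsilon$. Summing over $s'\in S$ and using the triangle inequality together with $\sum_{s'}P(s'|s,a)=1$ gives $|\,Z_{s,a}-1\,| \le |S|\epsilon$, where $Z_{s,a} \triangleq \sum_{s'}\bar P(s'|s,a)$ is the normalising constant of the estimated row; writing the (renormalised) estimate $P'(s'|s,a) = \bar P(s'|s,a)/Z_{s,a}$ and $f \triangleq 1/Z_{s,a}$ yields exactly the stated sandwich $\tfrac{1}{1+\epsilon|S|}\le f\le \tfrac{1}{1-\epsilon|S|}$ (this also needs $|S|\epsilon<1$, which we may assume since otherwise $TV\le 1$ makes the claim trivial).

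Next I would bound the total variation directly. Writing $TV(P'(\cdot|s,a),P(\cdot|s,a)) = \tfrac12\sum_{s'}|f\bar P(s'|s,a) - P(s'|s,a)|$ and decomposing the summand as $f\big(\bar P(s'|s,a)-P(s'|s,a)\big) + (f-1)P(s'|s,a)$, the triangle inequality gives the bound $\tfrac12\big(f\sum_{s'}|\bar P(s'|s,a)-P(s'|s,a)| + |1-f|\sum_{s'}P(s'|s,a)\big) \le \tfrac12(|1-f| + f|S|\epsilon)$, using the entrywise bound from the first step and $\sum_{s'}P(s'|s,a)=1$. This is the first claim.

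For the policy-averaged versions, I would use that $\bar P_\pi(\cdot|s) = \sum_a \pi(a|s)P'(\cdot|s,a)$ is a mixture of the per-action estimates, and likewise for $P_\pi$, so by convexity of $TV$ in its arguments $TV(\bar P_\pi(\cdot|s),P_\pi(\cdot|s)) \le \sum_a \pi(a|s)\,TV(P'(\cdot|s,a),P(\cdot|s,a))$, each term being bounded by the previous step; since $g(f)\triangleq|1-f|+f|S|\epsilon$ is piecewise linear and maximised at the endpoints of the admissible interval for $f$, the common bound $\tfrac12(|1-f|+f|S|\epsilon)$ can be stated uniformly by taking $f$ at the worse endpoint. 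For $TV(\bar P_\pi(s',a'|s),P_\pi(s',a'|s))$, I would note $P_\pi(s',a'|s)=\pi(a'|s')\sum_a\pi(a|s)P(s'|s,a)$ and likewise with $P'$; computing $\tfrac12\sum_{s',a'}|\cdot|$ and summing first over $a'$ (where $\sum_{a'}\pi(a'|s')=1$) collapses the expression to $TV(\bar P_\pi(\cdot|s),P_\pi(\cdot|s))$, giving the same bound.

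The computation is essentially bookkeeping; the one place that needs care is that $\bar P$ produced by the approximate CP-decomposition of \cref{thm:debound} need not be a valid transition kernel (rows need not sum to one, entries may even be slightly negative), so the lemma is really a statement about the renormalised kernel $P'$, and one must track that the normalising constant $f$ depends on $(s,a)$ — handled above by bounding $g(f)$ over the whole admissible interval so that a single inequality covers every state-action pair. A minor additional subtlety is the implicit assumption $|S|\epsilon<1$ needed for $f>0$ and for the denominators to make sense.
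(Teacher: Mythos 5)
Your proposal is correct and follows essentially the same route as the paper's proof: renormalise $\bar P(\cdot|s,a)$ by $f$, apply the triangle inequality through the intermediate point $fP$, and use the entrywise/Frobenius bound summed over $s'$, with the policy-averaged cases handled by mixing over actions and collapsing the sum over $a'$ (the paper's ``Fubini'' step). Your extra bookkeeping — deriving the sandwich on $f$ from $|Z_{s,a}-1|\le|S|\epsilon$, noting the implicit $|S|\epsilon<1$, and tracking the $(s,a)$-dependence of $f$ — only makes explicit what the paper leaves implicit.
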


We now bound the error of model-based evaluation using approximate dynamics in \cref{thm:q_err}. The first component on the RHS of the upper bound comes from the tensor analysis of the transition dynamics, whereas the second component can be attributed to error propagation for the rewards.
\begin{theorem}[Error bound on policy evaluation]
\label{thm:q_err}
Given a behaviour policy $\pi_b$ satisfying the conditions in \cref{thm:debound} and executed for steps $\geq N_0$, for any policy $\pi$ the model based policy evaluation $Q_{\bar P,\bar R}^\pi$ satisfies:
\begin{align}
|Q_{P,R}^\pi(s,a) - Q_{\bar P,\bar R}^\pi(s,a)|\leq &(|1-f|+f|S|\epsilon)\frac{\gamma}{2(1-\gamma)^2} \\ &+ \frac{\epsilon}{1-\gamma}, \forall (s,a)\in S\times U^n
\end{align} where $f$ is as defined in \cref{tvbound}.
\end{theorem}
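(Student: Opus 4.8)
The plan is to bound the difference between the two Bellman fixed points by controlling how the errors in the model parameters propagate through the Bellman operator, and then summing the geometric series in $\gamma$. Let $\mathcal{T}_{P,R}^\pi$ and $\mathcal{T}_{\bar P,\bar R}^\pi$ denote the Bellman expectation operators induced by the true and estimated dynamics respectively, so that $Q_{P,R}^\pi$ and $Q_{\bar P,\bar R}^\pi$ are their respective fixed points. The standard trick is the identity $Q_{P,R}^\pi - Q_{\bar P,\bar R}^\pi = \mathcal{T}_{P,R}^\pi Q_{P,R}^\pi - \mathcal{T}_{\bar P,\bar R}^\pi Q_{\bar P,\bar R}^\pi$, which I would split as $(\mathcal{T}_{P,R}^\pi Q_{P,R}^\pi - \mathcal{T}_{\bar P,\bar R}^\pi Q_{P,R}^\pi) + (\mathcal{T}_{\bar P,\bar R}^\pi Q_{P,R}^\pi - \mathcal{T}_{\bar P,\bar R}^\pi Q_{\bar P,\bar R}^\pi)$. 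The second term is a $\gamma$-contraction applied to $Q_{P,R}^\pi - Q_{\bar P,\bar R}^\pi$ in sup-norm, so it contributes $\gamma \lVert Q_{P,R}^\pi - Q_{\bar P,\bar R}^\pi \rVert_\infty$; the first term is the one-step ``model mismatch'' term that must be bounded uniformly in $(s,a)$.

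For the model-mismatch term, I would further decompose it into a reward part and a transition part: $\mathcal{T}_{P,R}^\pi Q - \mathcal{T}_{\bar P,\bar R}^\pi Q = (R - \bar R) + \gamma\big((P_\pi - \bar P_\pi) Q\big)$ evaluated at $Q = Q_{P,R}^\pi$. The reward part is bounded by $\lVert R(s) - \bar R(s)\rVert_\infty \le \lVert R(s) - \bar R(s)\rVert_F \le \epsilon$ using \cref{thm:debound}. The transition part is where \cref{tvbound} enters: for any fixed $Q$ with $\lVert Q \rVert_\infty \le 1/(1-\gamma)$ (valid since rewards lie in $[0,1]$), the difference $\lvert \sum_{s',a'} (\bar P_\pi(s',a'|s) - P_\pi(s',a'|s)) Q(s',a') \rvert$ is bounded by $2\,TV(\bar P_\pi(\cdot|s), P_\pi(\cdot|s)) \cdot \lVert Q\rVert_\infty \le (|1-f| + f|S|\epsilon)\cdot \frac{1}{1-\gamma}$. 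Combining, the one-step model-mismatch term is at most $\epsilon + \frac{\gamma}{1-\gamma}(|1-f| + f|S|\epsilon)$. Then $\lVert Q_{P,R}^\pi - Q_{\bar P,\bar R}^\pi\rVert_\infty \le \epsilon + \frac{\gamma}{1-\gamma}(|1-f|+f|S|\epsilon) + \gamma \lVert Q_{P,R}^\pi - Q_{\bar P,\bar R}^\pi\rVert_\infty$, and solving for the norm gives $\frac{1}{1-\gamma}\big(\epsilon + \frac{\gamma}{1-\gamma}(|1-f|+f|S|\epsilon)\big) = \frac{\epsilon}{1-\gamma} + \frac{\gamma}{2(1-\gamma)^2}\cdot 2(|1-f|+f|S|\epsilon)$ — wait, I need to be careful with the constant $\tfrac12$: since the bound in \cref{tvbound} already carries the $\tfrac12$ and the variational characterization of $TV$ gives $|\mathbb{E}_P g - \mathbb{E}_{\bar P} g| \le 2 TV \cdot \lVert g\rVert_\infty$ only when $g$ is not centered, I would instead use the tighter $\le 2 TV \cdot \tfrac12(\max g - \min g) = TV\cdot\mathrm{osc}(g)$, which with $\mathrm{osc}(Q) \le \tfrac{1}{1-\gamma}$ yields exactly the stated coefficient $\frac{\gamma}{2(1-\gamma)^2}(|1-f|+f|S|\epsilon)$ after dividing by $1-\gamma$. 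This matches the claimed bound.

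The main obstacle I anticipate is getting the constants exactly right in the transition term — specifically, whether to bound $\lvert \mathbb{E}_{\bar P}[Q] - \mathbb{E}_P[Q]\rvert$ by $2\,TV\cdot\lVert Q\rVert_\infty$ or by the oscillation bound $TV\cdot\mathrm{osc}(Q)$, and correspondingly whether $\mathrm{osc}(Q^\pi) \le \frac{1}{1-\gamma}$ or whether one needs $\frac{1}{1-\gamma}$ with a factor matching the $[0,1]$ reward range. Using $r \in [0,1]$ gives $Q^\pi \in [0, \frac{1}{1-\gamma}]$ hence $\mathrm{osc}(Q^\pi)\le \frac{1}{1-\gamma}$, and then the $\tfrac12$ in \cref{tvbound}'s statement combines with the factor $2$ from the identity $|\mathbb{E}_P g - \mathbb{E}_{\bar P} g|\le 2TV\cdot\lVert g\rVert_\infty$ applied to a recentered $g$ to produce the coefficient $\frac{\gamma}{2(1-\gamma)^2}$. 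A secondary point is ensuring the sup-norm contraction argument is applied over the joint state-action space $S\times U^n$ (using $P_\pi(s',a'|s)$, the joint next-state-action kernel from \cref{tvbound}), rather than over states alone, so that the fixed-point equation being iterated is the state-action Bellman equation; \cref{tvbound} conveniently already supplies the needed $TV$ bound for $P_\pi(s',a'|s)$.
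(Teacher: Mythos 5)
Your proof is correct and arrives at exactly the stated constants, but it takes a genuinely different route from the paper. The paper first splits via the triangle inequality through the intermediate quantity $Q_{\bar P,R}^\pi$ (true rewards, estimated transitions), bounds the reward part by $\epsilon/(1-\gamma)$, and handles the transition part with a hybrid-horizon telescoping argument: it defines $Q_h^\pi$ by unrolling the first $h$ steps under $\bar P_\pi$ and the remainder under $P_\pi$, bounds each consecutive difference $|Q_h^\pi - Q_{h+1}^\pi|$ by $\tfrac{\gamma^{h+1}}{1-\gamma}\,TV(\bar P_\pi,P_\pi)$ using \cref{tvbound}, and sums the geometric series to get $\tfrac{\gamma}{2(1-\gamma)^2}(|1-f|+f|S|\epsilon)$. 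You instead exploit that both $Q$-functions are fixed points of $\gamma$-contractions in sup-norm, bound the one-step model mismatch (reward error $\le\epsilon$ entrywise via the Frobenius bound, transition error via $TV\cdot\mathrm{osc}(Q^\pi)\le \tfrac{1}{2}(|1-f|+f|S|\epsilon)\cdot\tfrac{1}{1-\gamma}$), and solve the resulting inequality for $\lVert Q^\pi_{P,R}-Q^\pi_{\bar P,\bar R}\rVert_\infty$. Your contraction argument is shorter and treats both error sources in a single inequality, at the cost of needing the fixed-point characterisation and a uniform oscillation bound on the true $Q^\pi$ (which holds since $r\in[0,1]$); the paper's telescoping makes the per-horizon error accumulation explicit and isolates the transition error from the reward error, which adapts more directly to finite-horizon or non-stationary variants. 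Your care about the factor $\tfrac12$ (oscillation bound versus $2\,TV\cdot\lVert Q\rVert_\infty$) is exactly the point needed to recover the paper's coefficient $\tfrac{\gamma}{2(1-\gamma)^2}$; the paper uses the same implicit step when it asserts the per-term bound $\tfrac{\gamma^{h+1}}{1-\gamma}TV$.
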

Additional theoretical discussion can be found in \cref{app:atd}


 \section{Experiments}
\label{sec:exps}

In this section we present the empirical results on the StarCraft domain. Experiments for a more didactic domain of Tensor games can be found in \cref{app:tg}. We use the model-free version of \textsc{Tesseract} (\cref{alg:model-free}) for all the experiments.

\begin{figure*}[h]
	\centering
	\subfigure[3s5z \textbf{Easy}]{
		\includegraphics[width=0.325\linewidth]{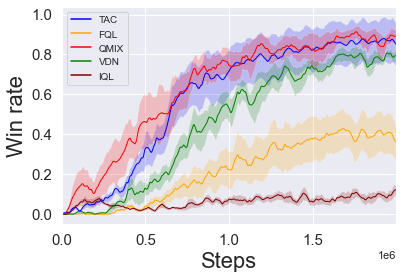}\label{fig:3s5z_smac}}
	\subfigure[2s\_vs\_1sc \textbf{Easy}]{
		\includegraphics[width=0.325\linewidth]{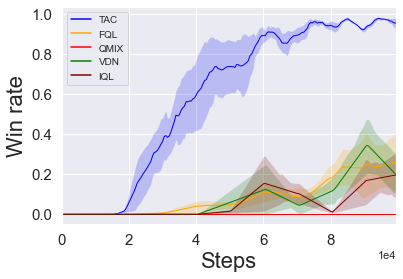}\label{fig:2s_vs_1sc}}
	\subfigure[2c\_vs\_64zg \textbf{Hard}]{
		\includegraphics[width=0.325\linewidth]{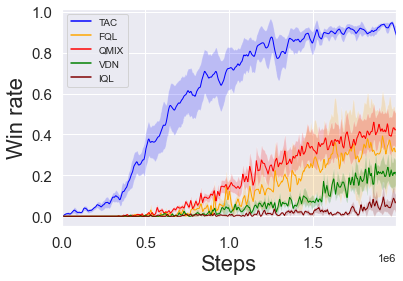}\label{fig:2c_vs_64z}}
	\subfigure[5m\_vs\_6m \textbf{Hard}]{
		\includegraphics[width=0.325\linewidth]{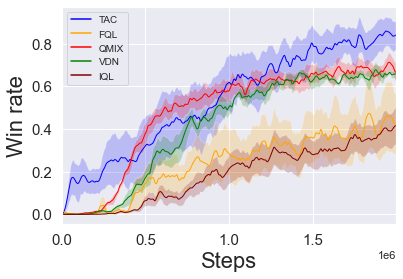}\label{fig:5m_vs_6m}}
	\subfigure[MMM2 \textbf{Super Hard}]{
		\includegraphics[width=0.325\linewidth]{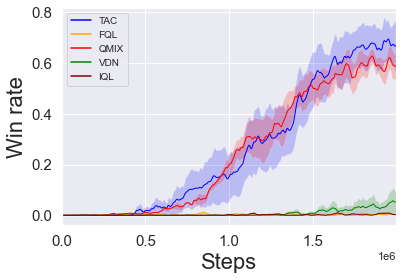}\label{fig:MMM2}}
	\subfigure[27m\_vs\_30m \textbf{Super Hard}]{
		\includegraphics[width=0.325\linewidth]{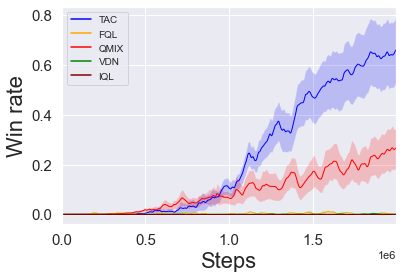}\label{fig:27m_vs_30m}}
	\subfigure[6h\_vs\_8z \textbf{Super Hard}]{
		\includegraphics[width=0.325\linewidth]{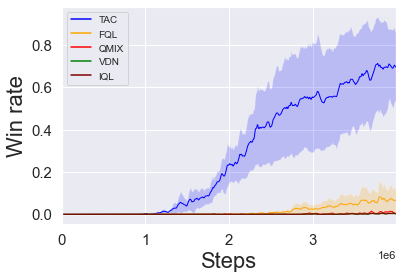}\label{fig:6h8z}}
	\subfigure[Corridor \textbf{Super Hard}]{
		\includegraphics[width=0.325\linewidth]{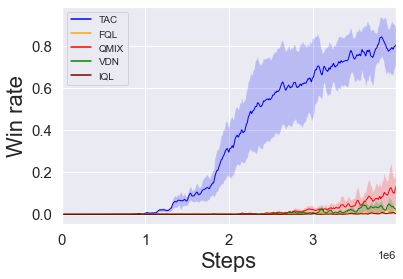}\label{fig:corridor}}
	\caption{Performance of different algorithms on different SMAC scenarios: \textcolor{blue}{TAC}, \textcolor{red}{QMIX}, \textcolor[rgb]{0,0.7,0}{VDN}, \textcolor{orange}{FQL}, \textcolor[rgb]{0.76, 0.13, 0.28}{IQL}. \label{fig:smac_exp}}
\end{figure*}

\paragraph{StarCraft II}
We consider a challenging set of cooperative scenarios from the StarCraft  Multi-Agent Challenge (SMAC) \citep{samvelyan2019starcraft}. Scenarios in SMAC have been classified as \textbf{Easy, Hard and Super-hard} according to the performance of exiting algorithms on them. We compare \textsc{Tesseract} (\textcolor{blue}{TAC} in plots) to, \textcolor{red}{QMIX} \citep{rashid2018qmix}, \textcolor[rgb]{0,0.7,0}{VDN} \citep{sunehag_value-decomposition_2017}, \textcolor{orange}{FQL} \citep{chen2018factorized}, and \textcolor[rgb]{0.76, 0.13, 0.28}{IQL} \citep{tan_multi-agent_1993}. VDN and QMIX use monotonic approximations for learning the Q-function. FQL uses a pairwise factorized model to capture effects of agent interactions in joint Q-function, this is done by learning an inner product space for summarising agent trajectories. IQL ignores the multi-agentness of the problem and learns an independent per agent policy for the resulting non-stationary problem. \cref{fig:smac_exp} gives the win rate of the different algorithms averaged across five random runs. \cref{fig:2c_vs_64z} features 2c\_vs\_64zg, a hard scenario that contains two
allied agents but 64 enemy units (the largest in the SMAC domain) making the action space of
the agents much larger than in the other scenarios. \textsc{Tesseract} gains a huge lead over all the other algorithms in just one million steps. For the asymmetric scenario of 5m\_vs\_6m \cref{fig:5m_vs_6m}, \textsc{Tesseract}, QMIX, and VDN learn effective policies, similar behavior occurs in the heterogeneous scenarios of 3s5z \cref{fig:3s5z_smac} and MMM2\cref{fig:MMM2} with the exception of VDN for the latter. In 2s\_vs\_1sc in \cref{fig:2s_vs_1sc}, which requires a `kiting' strategy to defeat the spine crawler, \textsc{Tesseract} learns an optimal policy in just 100k steps. In the \textbf{super-hard} scenario of 27m\_vs\_30m \cref{fig:27m_vs_30m} having largest ally team of 27 marines, \textsc{Tesseract} again shows improved sample efficiency; this map also shows \textsc{Tesseract}'s ability to scale with the number of agents. Finally in the \textbf{super-hard} scenarios of 6 hydralisks vs 8 zealots \cref{fig:6h8z} and Corridor \cref{fig:corridor} which require careful exploration, \textsc{Tesseract} is the only algorithm which is able to find a good policy. We observe that IQL doesn't perform well on any of the maps as it doesn't model agent interactions/non-stationarity explicitly. FQL loses performance possibly because modelling just pairwise interactions with a single dot product might not be expressive enough for joint-Q. Finally, VDN and QMIX are unable to perform well on many of the challenging scenarios possibly due to the monotonic approximation affecting the exploration adversely \citep{mahajan2019maven}.
Additional plots and experiment details can be found in \cref{app:sc2} with \textbf{comparison with other baselines in \cref{app:additional_sc2}} including QPLEX\citep{wang2020qplex}, QTRAN\citep{ son2019qtran}, HQL\citep{matignon2007hysteretic}, COMA\citep{foerster2018counterfactual} . We detail the techniques used for stabilising the learning of tensor decomposed critic in \cref{app:techniques}.
	
\section{Related Work}
Previous methods for modelling multi-agent interactions include those that use coordination graph methods for learning a factored joint action-value estimation \cite{guestrin2002coordinated,guestrin2002context,bargiacchi2018learning}, however typically require knowledge of the underlying coordination graph. 
To handle the exponentially growing complexity of the joint action-value functions with the number of agents, a series of value-based methods have explored different forms of value function factorisation.
VDN~\cite{sunehag_value-decomposition_2017} and QMIX~\cite{rashid2018qmix} use monotonic approximation with latter using a mixing network conditioned on global state. 
QTRAN~\cite{son2019qtran} avoids the weight constraints imposed by QMIX by formulating multi-agent learning as an optimisation problem with linear constraints and relaxing it with L2 penalties. 
MAVEN~\cite{mahajan2019maven} learns a diverse ensemble of monotonic approximations by conditioning agent $Q$-functions on a latent space which helps overcome the detrimental effects of QMIX’s monotonicity constraint on exploration. Similarly, Uneven~\cite{gupta2020uneven} uses universal successor features for efficient exploration in the joint action space. 
Qatten~\cite{Yang2020QattenAG} makes use of a multi-head attention mechanism to decompose $Q_{tot}$ into a linear combination of per-agent terms. RODE~\cite{wang2020rode} learns an action effect based role decomposition for sample efficient learning.
Policy gradient methods, on the other hand, often utilise the actor-critic framework to cope with decentralisation.
MADDPG~\cite{lowe2017multi} trains a centralised critic for each agent. 
COMA~\cite{foerster2018counterfactual} 
employs a centralised critic and a counterfactual advantage function.
These actor-critic methods, however, suffer from poor sample efficiency compared to value-based methods and often converge to sub-optimal local minima. While sample efficiency has been an important goal for single agent reinforcement learning methods ~\cite{mahajan2017symmetryde, mahajan2017symmetryl, kakade2003sample, lattimore2013sample}, in this work we shed light on attaining sample efficiency for cooperative multi-agent systems using low rank tensor approximation.

\vspace{-2mm}
\emph{Tensor methods} have been used in machine learning, in the context of learning latent variable models~\cite{anandkumar2014tensor} and signal processing \cite{sidiropoulos2017tensor}. 
Tensor methods provides powerful analytical tools that have been used for various applications, including the theoretical analysis of deep neural networks~\cite{cohen2016expressive}.
Model compression using tensors~\cite{cheng2017survey} has recently gained momentum owing to the large sizes of deep neural nets.
Using tensor decomposition within deep networks, it is possible to both compress and speed them up~\cite{cichocki2017tensor,t_net}. They allow generalization to higher orders ~\cite{kossaifi2019efficient} and have also been used for multi-task learning and domain adaptation~\cite{bulat2019incremental}. In contrast to prior work on value function factorisation, \textsc{Tesseract} provides a natural spectrum for approximation of action-values based on the rank of approximation and provides theoretical guarantees derived from tensor analysis. Multi-view methods utilising tensor decomposition have previously been used in the context of partially observable single-agent RL~\cite{azizzadenesheli2016reinforcement,azizzadenesheli2019reinforcement}. There the goal is to efficiently infer the underlying MDP parameters for planning under rich observation settings~\cite{krishnamurthy2016pac}. Similarly \citep{bromuri2012tensor} use four dimensional factorization to generalise across Q-tables whereas here we use them for modelling interactions across multiple agents.
\section{Conclusions \& Future Work}
We introduced \textsc{Tesseract}, a novel framework utilising the insight that the joint action value function for MARL can be seen as a tensor. \textsc{Tesseract} provides a means for developing new sample efficient algorithms and obtain essential guarantees about convergence and recovery of the underlying dynamics. We further showed novel PAC bounds for learning under the framework using model-based algorithms. We also provided a model-free approach to implicitly induce low rank tensor approximation for better sample efficiency and showed that it outperforms current state of art methods. 
There are several interesting open questions to address in future work, such as convergence and error analysis for rank insufficient approximation, and analysis of the learning framework under different types of tensor decompositions like Tucker and tensor-train \citep{kolda2009tensor}.

\section{Acknowledgements}
AM is funded by the J.P. Morgan A.I. fellowship. Part of this work was done during AM's internship at NVIDIA. This project has received funding from the European Research Council under the European Union’s Horizon 2020 research and innovation programme (grant agreement number 637713).  The experiments were made possible by generous equipment grant from NVIDIA.	

\clearpage
\newpage
\bibliographystyle{icml2021}
\bibliography{tesseract}
\clearpage
\newpage

\onecolumn

\appendix
\section{Additional Proofs}
\addtocounter{theorem}{-2}
\addtocounter{proposition}{-1}
\addtocounter{lemma}{-1}
\label{app:proofs}
\subsection{Proof of \cref{thm:debound}}
\label{proof:app_dbound}

\begin{theorem}[Model based estimation of $\hat R, \hat P$ error bounds]
Given any $\epsilon>0, 1>\delta>0$, for a policy $\pi$ with the policy tensor satisfying $\pi(\mathbf{u}|s)\geq \Delta$, where
\begin{align}
\label{eq:polcon}
\Delta = \max_s 
\frac{C_1 \mu_{s}^6 k^5 (w_{s}^{\text{max}})^4 \log(|U|)^4 \log(3k||R(s)||_{F}/\epsilon)}
{|U|^{n/2} (w_{s}^{\text{min}})^4}  
\end{align}
and $C_1$ is a problem dependent positive constant. There exists $N_0$ which is $O(|U|^{\frac{n}{2}})$ and polynomial in $\frac{1}{\delta},\frac{1}{\epsilon}, k$ and relevant spectral properties of the underlying MDP dynamics such that for samples $\geq N_0$, we can compute the estimates $\bar R(s), \bar P(s,s')$ such that w.p. $\geq 1-\delta$, $||\bar{R}(s)-\hat R(s)||_F\leq \epsilon, ||\bar{P}(s,s')-\hat P(s,s')||_F\leq \epsilon, \forall s,s' \in S$.
\end{theorem}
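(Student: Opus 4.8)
The plan is to recover the reward tensors $\hat R(s)$ and transition tensors $\hat P(s,s')$ by a method-of-moments argument: trajectory samples drawn under the factorised behaviour policy are turned into unbiased, bounded-variance observations of individual tensor entries, and a low-rank tensor-completion routine is then run on these observations for each fixed $s$ (resp.\ each pair $(s,s')$). First I would note that when a joint action $\mathbf u=(u^1,\dots,u^n)$ is sampled from $\pi(\cdot\mid s)=\prod_i\pi^i(\cdot\mid s)$ and reward $r$ is received, $\mathbb E[r\mid s,\mathbf u]=\hat R(s)(\mathbf u)$, so every visit to $s$ yields a noisy readout of exactly one entry of $\hat R(s)$, with the entry index drawn from the product measure $\pi(\cdot\mid s)$; applying the same idea to the Bernoulli indicator $\mathbf 1[s_{t+1}=s']$ gives noisy entrywise readouts of $\hat P(s,s')$. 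By \cref{a3} and a standard mixing-time bound, after $N_0$ environment steps each state $s$ is visited $\Theta\!\big(N_0\rho^\pi(s)\big)$ times, so it suffices to set $N_0$ equal to the per-state sample requirement (derived below) times $1/\min_s\rho^\pi(s)$ times the mixing time — these, together with the least singular values of the CP factor matrices, are the ``relevant spectral properties'' absorbed into $N_0$ — and then to union-bound the failure event over the at most $|S|+|S|^2$ tensors being estimated, which costs only an extra $\log(|S|/\delta)$ and $\mathrm{poly}(|S|)$ factor.

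The heart of the argument is the per-tensor recovery. Using the CP form $\hat R(s)=\sum_{r\le k_1}w_{r,s}\,\otimes_i v_{r,i,s}$ from \cref{a1} and \cref{CPD}, and the fact that merging modes of a CP decomposition does not increase its rank, I would group the $n$ agent-modes into three bundles of size $\approx n/3$, reshaping $\hat R(s)$ into an order-$3$ tensor of side length $d=|U|^{\lceil n/3\rceil}$, still of rank $\le k_1$, whose coherence and weight-spread are controlled by $\mu_s$ and $w_s^{\max}/w_s^{\min}$ (and analogously with the transition-side coherence parameters for $\hat P(s,s')$). I would then feed the sampled noisy entries into a noise-robust low-rank tensor-completion guarantee of the alternating-minimisation type, using the hypothesis $\pi(\mathbf u\mid s)\ge\Delta$ to (i) ensure every relevant entry is observed, and re-observed often enough that per-entry averaging kills the reward/Bernoulli noise, and (ii) bound the importance-reweighting factor (at most $\propto 1/\Delta$) that controls the variance of the empirical moments. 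The sample count for one such order-$3$ tensor is $\tilde{O}\!\big(d^{3/2}\,k^5\mu_s^6(w_s^{\max}/w_s^{\min})^4\big)=\tilde{O}\!\big(|U|^{n/2}\,\mathrm{poly}(k)\big)$; tracing the coverage and repetition requirements of the completion routine back through the visit-count estimate of the first paragraph is exactly what pins $\Delta$ to the stated form, with the $\log^4|U|$ and $\log(3k\|R(s)\|_F/\epsilon)$ factors being the standard completion/accuracy overheads and $C_1$ absorbing the reshaping and spectral constants.

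Finally I would transfer the recovery error back to the original objects: un-reshaping is a Frobenius isometry, and dividing out the known policy factors $\pi^i$ (each bounded below, so the division amplifies error by at most a $\mathrm{poly}$ factor) converts an $\epsilon'$-accurate estimate of the reshaped order-$3$ tensor into an $\epsilon'\!\cdot\mathrm{poly}$-accurate $\bar R(s)$; choosing $\epsilon'$ a suitable polynomial fraction of $\epsilon$ and likewise for $\bar P(s,s')$, then combining the per-tensor guarantees under the union bound of the first paragraph, yields $\|\bar R(s)-\hat R(s)\|_F\le\epsilon$ and $\|\bar P(s,s')-\hat P(s,s')\|_F\le\epsilon$ for all $s,s'$ with probability $\ge 1-\delta$, with total sample budget $N_0=O(|U|^{n/2})$ polynomial in $1/\delta,1/\epsilon,k$ and the spectral quantities above.

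The step I expect to be the main obstacle is the tensor-completion guarantee itself under \emph{non-uniform, product-structured} sampling with additive (and, for transitions, Bernoulli) noise: off-the-shelf completion results assume i.i.d.\ uniform observations of an exactly low-rank tensor, so what is really needed is a noise-robust, non-uniform-sampling version whose constants are explicit in $k$, $\mu_s$ and $w_s^{\max}/w_s^{\min}$ — and the lower bound $\pi(\mathbf u\mid s)\ge\Delta$ is precisely the quantitative price of reducing the product-sampling case to the near-uniform regime those results require.
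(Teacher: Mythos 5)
Your overall architecture --- turning trajectory samples into per-entry observations of each $\hat R(s)$ and $\hat P(s,s')$, using the product-policy lower bound $\pi(\mathbf{u}\mid s)\ge\Delta$ as the per-entry observation probability demanded by a low-rank recovery routine, handling the delay to stationarity with a mixing-time burn-in plus a Hoeffding bound on empirical visit counts, and union-bounding over the $O(|S|^2)$ tensors --- matches the paper's proof, and you correctly trace the $|U|^{n/2}$ to the $d^{3/2}$ sampling threshold of order-3 tensor completion (the paper restricts to $n=3$, so $d=|U|$ directly; your mode-merging reshape is a reasonable way to phrase the general-$n$ case, which the paper only gestures at by citing higher-order extensions). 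The substantive divergence is in how the two families of tensors are recovered. For $\hat R(s)$ the paper applies the alternating-minimisation completion guarantee of \cite{jain2014provable} essentially as you propose, but note that in an MMDP the reward is a deterministic function of $(s,\mathbf{u})$, so each visit yields an \emph{exact} entry rather than a noisy readout; this is what lets the noiseless completion theorem apply verbatim and dissolves half of the obstacle you flag. For $\hat P(s,s')$, where the observations genuinely are Bernoulli and entrywise completion would require the noise-robust, non-uniform variant you rightly identify as unavailable off the shelf, the paper takes a different route: it views $\hat P(s,s')$ under the bounded-rank assumption as a $k$-factor latent topic model and recovers the CP factors via the spectral method of moments of \cite{anandkumar2012method}, then converts factorwise $\ell_2$ error into Frobenius error of the tensor by an appropriate choice of $\epsilon'$. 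So the gap you name is real for your route but is circumvented, rather than solved, by the paper. One smaller omission: the completion theorem delivers only a fixed success probability, so to reach an arbitrary $1-\delta/3$ the paper computes $M$ independent estimates and outputs a member of the largest mutually $\tfrac{2\epsilon}{3}$-close cluster (a median-of-means style amplification); your claim that the union bound costs only an extra $\log(|S|/\delta)$ implicitly relies on such a boosting step, which you should state explicitly.
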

\begin{proof}
For the simplicity of notation and emphasising key points of the proof, we focus on orthogonal symmetric tensors with $n=3$. Guidelines for more general cases are provided by the end of the proof. 

We break the proof into three parts:
Let policy $\pi$ satisfy $\pi(\mathbf{u}|s)\geq \Delta$ \cref{eq:polcon}. Let $\rho$ be the stationary distribution of $\pi$ (exists by \cref{a3}) and let $N_1  = \max_s \frac{1}{\rho(s)}\log\Big(\frac{12\sqrt{k}||R(s)||_F}{\epsilon}\Big)$. From $N_1$ samples drawn from $\rho$ by following $\pi$, we estimate $\bar{R}$, the estimated reward tensor computed by using Algorithm $1$ in \cite{jain2014provable}. We have by application of union bound along with Theorem $1.1$ in \cite{jain2014provable} for each $s\in S$, w.p. $\geq 1 - |U|^{-5}\log_2\Big(\frac{12\sqrt{k}\prod_s||R(s)||_F}{\epsilon}\Big) = p_\epsilon$, $||\bar{R}(s)-\hat R(s)||_F\leq \epsilon/3, \forall s\in S$. We now provide a boosting scheme to increase the confidence in the estimation of $\hat R(\cdot)$ from $p_\epsilon$  to $1 - \delta/3$. Let $\eta = \frac{1}{2}\Big(p_\epsilon -\frac{1}{2}\Big) > 0$ (for clarity of the presentation we assume $p_\epsilon >\frac{1}{2}$ and refer the reader to \cite{kearns1994introduction} for the other more involved case). We compute $M$ independent estimates $\{\bar R_i, i \in \{1..M\}\}$ for $\hat R(s)$ and find the biggest cluster $\mathcal{C} \subseteq \{\bar R_i\}$ amongst the estimates such that for any $\bar R_i, \bar R_j \in \mathcal{C}, ||\bar R_i-\bar R_j||_F \leq \frac{2\epsilon}{3}$. We then output any element of $\mathcal{C}$. Intuitively as $p_\epsilon >\frac{1}{2}$, most of the estimates will be near the actual value $\hat R(s)$, this can be confirmed by using the Hoeffding Lemma\cite{kearns1994introduction}. It follows that for $M\geq \frac{1}{2\eta^2}\ln(\frac{3|S|}{\delta})$ the output of the above procedure satisfies $||\bar{R}(s)-\hat R(s)||_F\leq \epsilon$ w.p. $\geq 1 - \frac{\delta}{3|S|}$ for any particular $s$. Thus $MN_1$ samples from stationary distribution are sufficient to ensure that for all $s \in S$, w.p. $\geq 1 - \delta/3$, $||\bar{R}(s)-\hat R(s)||_F\leq \epsilon$.

Secondly we note that $\hat P (s,s')$ for any $s,s' \in S$ is a tensor whose entries are the parameters of a Bernoulli distribution. Under \cref{a2}, it can be seen as a latent topic model \cite{anandkumar2012method} with $k$ factors, $\hat P(s,s') = \sum_{r=1}^{k} w_{s,s',r}\otimes^n u_{s,s',r} $. Moreover it satisfies the conditions in Theorem $3.1$ \cite{anandkumar2012method} so that $\exists N_2  = \max_{s,s'}\frac{1}{\rho(s)} N_2(s,s')$ where each $N_2(s,s')$ is $\mathcal{O}\Big(\frac{k^{10} |S|^2 \ln^2(3|S|/\delta)}{\delta^2 \epsilon^{'2}}\Big)$ depending on the spectral properties of $\hat P(s,s')$ as given in the theorem and satisfies $||\bar{u_{s,s',r}}-u_{s,s',r}||_2\leq \epsilon'$ on running Algorithm B in \cite{anandkumar2012method} w.p. $\geq 1 - \frac{\delta}{3|S|}$. We pick $\epsilon' = \frac{\epsilon}{7n^2k \mu_{s,s'}^2 (w_{s,s'}^{\text{max}})^2} $ so that $||\bar{P}(s,s')-\hat P(s,s')||_F\leq \epsilon, \forall s,s' \in S$. We filter off the effects of sampling from a particular policy by using lower bound constraint in \cref{eq:polcon} and sampling $\frac{N_2}{\Delta}$ samples.

Finally we account for the fact that there is a delay in attaining the stationary distribution $\rho$ and bound the failure probability of significantly deviating from $\rho$ empirically. Let $\rho' = \min_s \rho(s)$ and $t_{\text{mix},\pi}(x)$ represent the minimum number of samples that need to drawn from the Markov chain formed by fixing policy $\pi$ so that for the state distribution $\rho_{t}(s)$ at time step $t = t_{\text{mix},\pi}(x)$ we have $TV(\rho_{t} - \rho)\leq x$ for any starting state $s\in S$ where $TV(\cdot,\cdot)$ is the total variation distance. We let the policy run for a burn in period of $t'=t_{\text{mix},\pi}(\rho'/4)$. For a sample of $N_3$ state transitions after the burn in period, let $\bar \rho$ represent the empirical state distribution. By applying the Hoeffding lemma for each state, we get: $P(|\bar\rho(s) - \rho_{t'}(s)|\geq \rho'/4)\leq 2\exp\Big(\frac{-N_3\rho^{'2}}{8}\Big)$, so that for $N_3\geq \frac{8}{\rho^{'2}}\ln\Big(\frac{6|S|}{\delta}\Big)$ we have w.p. $\geq 1 - \frac{\delta}{3|S|}$, $|\bar\rho(s) - \rho(s)|< \rho'/2, \forall s \in S$.

Putting everything together we get with $t_{\text{mix},\pi}(\rho'/4) + \max\{2MN_1, \frac{2N_2}{\Delta}, N_3\}$ samples, the underlying reward and probability tensors can be recovered such that w.p. $\geq 1-\delta$, $||\bar{R}(s)-\hat R(s)||_F\leq \epsilon, ||\bar{P}(s,s')-\hat P(s,s')||_F\leq \epsilon, \forall s,s' \in S$.

For extending the proof to the case of non-orthogonal tensors, we refer the reader to use whitening transform as elucidated in \cite{anandkumar2014tensor}. Likewise for asymmetric, higher order ($n>3$) tensors methods shown in \cite{jain2014provable, anandkumar2014tensor, anandkumar2012method} should be used. Finally for the case of M-POMDP and M-ROMDP, the corresponding results for single agent POMDP and ROMDP should be used, as detailed in \cite{azizzadenesheli2019reinforcement, azizzadenesheli2016reinforcement} respectively. 
\end{proof}

\subsection{Proof of \cref{tvbound}}
\label{proof:tvbound}
\begin{lemma}
For transition tensor estimates satisfying $||\bar{P}(s,s')-\hat P(s,s')||_F\leq \epsilon$, we have for any given state and action pair $s,a$, the distribution over the next states follows: $TV(P'(\cdot|s,a),P(\cdot|s,a))\leq \frac{1}{2}(|1-f|+f|S|\epsilon)$ where $\frac{1}{1+\epsilon|S|}\leq f \leq\frac{1}{1-\epsilon|S|}$. Similarly for any policy $\pi$, $TV(\bar P_{\pi}(\cdot|s),P_{\pi}(\cdot|s)), TV(\bar P_{\pi}(s',a'|s),P_{\pi}(s',a'|s))\leq \frac{1}{2}(|1-f|+f|S|\epsilon)$ 
\end{lemma}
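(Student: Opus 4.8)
The plan is to pass from the Frobenius bound on the order-$n$ transition tensors to an entrywise bound, control the renormalisation constant, and then run a single add-and-subtract estimate for the total variation.

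First I would fix a state $s$ and a joint action $\mathbf a$. Since a single coordinate of a tensor is bounded in absolute value by its Frobenius norm, the hypothesis $\|\bar P(s,s') - \hat P(s,s')\|_F \le \epsilon$ gives $|\bar P(s,s')(\mathbf a) - \hat P(s,s')(\mathbf a)| \le \epsilon$ for every $s'$. Because $\hat P(s,s')(\mathbf a) = P(s'|s,\mathbf a)$ and $\sum_{s'} P(s'|s,\mathbf a) = 1$, summing this entrywise bound over the $|S|$ successor states yields $\big|\sum_{s'}\bar P(s,s')(\mathbf a) - 1\big| \le |S|\epsilon$. Hence, assuming $|S|\epsilon < 1$ so the renormalisation is well defined, the constant $f = f(s,\mathbf a) := \big(\sum_{s'}\bar P(s,s')(\mathbf a)\big)^{-1}$ satisfies $\frac{1}{1+|S|\epsilon} \le f \le \frac{1}{1-|S|\epsilon}$, and I set the renormalised estimate $P'(s'|s,\mathbf a) := f\,\bar P(s,s')(\mathbf a)$, which now sums to one over $s'$.

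Next I would write $f\bar P(s,s')(\mathbf a) - P(s'|s,\mathbf a) = f\big(\bar P(s,s')(\mathbf a) - \hat P(s,s')(\mathbf a)\big) + (f-1)\,\hat P(s,s')(\mathbf a)$, take absolute values, apply the triangle inequality, and sum over $s'$. Plugging in the two estimates from the previous step, namely $\sum_{s'}|\bar P(s,s')(\mathbf a) - \hat P(s,s')(\mathbf a)| \le |S|\epsilon$ and $\sum_{s'}\hat P(s,s')(\mathbf a) = 1$, gives $\sum_{s'}|P'(s'|s,\mathbf a) - P(s'|s,\mathbf a)| \le f|S|\epsilon + |1-f|$, i.e. $TV(P'(\cdot|s,\mathbf a),P(\cdot|s,\mathbf a)) \le \tfrac12(|1-f| + f|S|\epsilon)$, which is the claimed bound.

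For the policy-weighted statements, $P_\pi(s'|s) = \sum_{\mathbf a}\pi(\mathbf a|s)P(s'|s,\mathbf a)$ and likewise $\bar P_\pi(s'|s) = \sum_{\mathbf a}\pi(\mathbf a|s)P'(s'|s,\mathbf a)$, so by the triangle inequality $TV(\bar P_\pi(\cdot|s),P_\pi(\cdot|s)) \le \sum_{\mathbf a}\pi(\mathbf a|s)\,TV(P'(\cdot|s,\mathbf a),P(\cdot|s,\mathbf a))$, a convex combination of quantities each already bounded by $\tfrac12(|1-f(s,\mathbf a)| + f(s,\mathbf a)|S|\epsilon)$; since every $f(s,\mathbf a)$ lies in $[\frac{1}{1+|S|\epsilon},\frac{1}{1-|S|\epsilon}]$, the bound holds with $f$ read as the worst case in that interval. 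For the joint successor law, $P_\pi(s',\mathbf a'|s) = \big(\sum_{\mathbf a}\pi(\mathbf a|s)P(s'|s,\mathbf a)\big)\pi(\mathbf a'|s')$, and summing $|\bar P_\pi(s',\mathbf a'|s) - P_\pi(s',\mathbf a'|s)|$ first over $\mathbf a'$ collapses the $\pi(\mathbf a'|s')$ factor, reducing it exactly to the marginal case. The only points needing care are the dependence of $f$ on $(s,\mathbf a)$ in the mixtures and the implicit assumption $|S|\epsilon < 1$ (without which the renormalisation, and the lemma as stated, is vacuous); neither is a genuine obstacle, so the argument is essentially a two-line computation once the entrywise reduction is in place.
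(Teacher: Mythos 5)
Your proposal is correct and follows essentially the same route as the paper's proof: renormalise the tensor-derived estimates by $f$, split $P - f\bar P$ via add-and-subtract, apply the triangle inequality, and use the entrywise (hence $|S|\epsilon$ summed) consequence of the Frobenius bound, with the policy-weighted cases reduced to the per-action case by convexity/Fubini. Your extra care in deriving the range of $f$ and noting the implicit assumption $|S|\epsilon<1$ only makes explicit what the paper leaves implicit.
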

\begin{proof}
Let $ \bar P(\cdot|s,a)$ be the next state probability estimates obtained from the tensor estimates. We next normalise them across the next states to get the (estimated)distribution $ P'(\cdot|s,a) = f \bar P(\cdot|s,a)$ where $f = \frac{1}{\sum_{s'}\bar P(s'|s,a)}$. Dropping the conditioning for brevity we have:
\begin{align}
TV(P',P) &= \frac{1}{2}\sum_{s'} |P(s')-f \bar P(s')|\\
&\leq\frac{1}{2}(\sum_{s'} |P(s')-f P(s')| +|f P(s')- \bar P(s')|)\\
&=\frac{1}{2}(|1-f|+f|S|\epsilon) 
\end{align}
The other two results follow using the definition of TV and Fubini's theorem followed by reasoning similar to above.
\end{proof}

\subsection{Proof of \cref{thm:q_err}}
\label{proof:app_q_err}
\begin{theorem}
[Error bound on policy evaluation]
Given a behaviour policy $\pi_b$ satisfying the conditions in \cref{thm:debound} and being executed for steps $\geq N_0$, we have that for any policy $\pi$ the model based policy evaluation $Q_{\bar P,\bar R}^\pi$ satisfies:
\begin{align}
|Q_{P,R}^\pi(s,a) - Q_{\bar P,\bar R}^\pi(s,a)|&\leq (|1-f|+f|S|\epsilon)\frac{\gamma}{2(1-\gamma)^2}+   \frac{\epsilon}{1-\gamma}, \forall (s,a)\in S\times U^n
\end{align} 
where $f$ is as defined in \cref{tvbound}.
\end{theorem}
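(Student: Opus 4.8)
## Proof Proposal for Theorem (Error bound on policy evaluation)

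The plan is to bound $|Q_{P,R}^\pi - Q_{\bar P,\bar R}^\pi|$ by splitting the total error into two contributions: the error arising from using the approximate reward $\bar R$ instead of $R$, and the error arising from using the approximate transition kernel $\bar P$ instead of $P$. A standard way to do this is to introduce an intermediate object — the action-value function $Q_{P,\bar R}^\pi$ that uses the true transitions but the approximate rewards — and apply the triangle inequality $|Q_{P,R}^\pi - Q_{\bar P,\bar R}^\pi| \le |Q_{P,R}^\pi - Q_{P,\bar R}^\pi| + |Q_{P,\bar R}^\pi - Q_{\bar P,\bar R}^\pi|$. The first term is controlled purely by the reward perturbation; the second by the transition perturbation. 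I expect the first term to give the $\frac{\epsilon}{1-\gamma}$ summand and the second to give the $(|1-f|+f|S|\epsilon)\frac{\gamma}{2(1-\gamma)^2}$ summand.

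First I would handle the reward term. Since $Q^\pi_{P,R}$ and $Q^\pi_{P,\bar R}$ satisfy Bellman expectation equations under the \emph{same} policy $\pi$ and \emph{same} kernel $P$, their difference satisfies $Q^\pi_{P,R} - Q^\pi_{P,\bar R} = (R-\bar R) + \gamma P^\pi (Q^\pi_{P,R} - Q^\pi_{P,\bar R})$, i.e.\ it is the fixed point of the Bellman operator applied to $R-\bar R$. By Theorem~\ref{thm:debound}, $\|\bar R(s) - \hat R(s)\|_F \le \epsilon$, and since the entrywise deviation is bounded by the Frobenius deviation, $|r(s,a) - \bar r(s,a)| \le \epsilon$ for all $(s,a)$. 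Then the usual Neumann-series / contraction argument gives $\|Q^\pi_{P,R} - Q^\pi_{P,\bar R}\|_\infty \le \sum_{t\ge 0}\gamma^t \epsilon = \frac{\epsilon}{1-\gamma}$.

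Next I would handle the transition term. Here $Q^\pi_{P,\bar R}$ and $Q^\pi_{\bar P,\bar R}$ use the same reward $\bar R$ (bounded in $[0,1]$, or more precisely absorbing the $\epsilon$-perturbation, so values lie in roughly $[0,\frac{1+\epsilon}{1-\gamma}]$; I would just use the crude bound $\|Q\|_\infty \le \frac{1}{1-\gamma}$ up to lower-order terms) but different kernels. Writing the standard simulation-lemma-style identity, $Q^\pi_{P,\bar R} - Q^\pi_{\bar P,\bar R} = \gamma (P^\pi - \bar P^\pi) Q^\pi_{P,\bar R} + \gamma \bar P^\pi (Q^\pi_{P,\bar R} - Q^\pi_{\bar P,\bar R})$, so iterating the contraction, $\|Q^\pi_{P,\bar R} - Q^\pi_{\bar P,\bar R}\|_\infty \le \frac{\gamma}{1-\gamma}\sup_{s,a}\big|(P(\cdot|s,a) - \bar P(\cdot|s,a))^\top Q^\pi_{P,\bar R}\big|$. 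The inner product is bounded by $2\,TV(P(\cdot|s,a),\bar P(\cdot|s,a))\cdot \|Q^\pi_{P,\bar R}\|_\infty$, and by Lemma~\ref{tvbound} the total variation distance is at most $\frac{1}{2}(|1-f|+f|S|\epsilon)$; combined with $\|Q\|_\infty \le \frac{1}{1-\gamma}$ this yields exactly $(|1-f|+f|S|\epsilon)\frac{\gamma}{2(1-\gamma)^2}$. Adding the two contributions gives the stated bound.

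The main obstacle I anticipate is bookkeeping around the normalisation constant $f$: the raw tensor estimate $\bar P$ need not be a proper distribution over next states, so one must be careful whether the intermediate quantity $Q^\pi_{\bar P,\bar R}$ is defined using the raw $\bar P$ or the renormalised $P' = f\bar P$. I would take the definition to use the renormalised kernel (consistent with Lemma~\ref{tvbound}, which already packages both the renormalisation error $|1-f|$ and the estimation error $f|S|\epsilon$ into a single $TV$ bound), so that the contraction arguments above all go through with genuine stochastic matrices. A secondary minor point is making sure the $\epsilon$-slack in $\bar R$ does not blow up the $\|Q\|_\infty \le \frac{1}{1-\gamma}$ bound used in the transition term; since any such correction is of order $\gamma\epsilon/(1-\gamma)^2$, it can be absorbed into the existing terms or simply noted as lower order, so I would not belabour it.
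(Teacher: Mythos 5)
Your overall route is essentially the paper's: split by the triangle inequality into a reward-perturbation term and a transition-perturbation term, bound the former by $\epsilon/(1-\gamma)$ via the contraction/Neumann argument, and bound the latter by combining a simulation-lemma recursion with the TV bound of \cref{tvbound}. The paper's explicit horizon-$h$ interleaving telescope is just the unrolled form of your fixed-point recursion, so the only structural difference is the intermediate point: the paper uses $Q^\pi_{\bar P,R}$ where you use $Q^\pi_{P,\bar R}$. The paper's choice is slightly cleaner precisely because its transition term then acts on a value function built from the \emph{true} reward $r\in[0,1]$, so $0\le Q\le \tfrac{1}{1-\gamma}$ holds exactly and your acknowledged $O(\epsilon)$ slack from $\|\bar R\|_\infty\le 1+\epsilon$ never arises.

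There is, however, one concrete gap in your transition term as written. From $\frac{\gamma}{1-\gamma}\sup_{s,a}\bigl|(P(\cdot|s,a)-\bar P(\cdot|s,a))^{\top}Q\bigr|$, bounding the inner product by $2\,TV\cdot\|Q\|_\infty\le(|1-f|+f|S|\epsilon)\cdot\frac{1}{1-\gamma}$ yields $(|1-f|+f|S|\epsilon)\frac{\gamma}{(1-\gamma)^2}$, which is \emph{twice} the stated bound, not ``exactly'' it. To recover the factor $\tfrac12$ you need the sharper inequality $\bigl|(P-\bar P)^{\top}Q\bigr|\le TV\cdot\mathrm{span}(Q)$ (subtract any constant from $Q$; the difference of the two next-state distributions sums to zero), together with $\mathrm{span}(Q)\le\frac{1}{1-\gamma}$ since rewards lie in $[0,1]$. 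This is what the paper uses implicitly when it bounds each interleaving term by $\frac{\gamma^{h+1}}{1-\gamma}\,TV\bigl(\bar P_\pi(s',a'|s),P_\pi(s',a'|s)\bigr)$ before summing the geometric series. With that one-line repair (and either the intermediate point swapped to $Q^\pi_{\bar P,R}$ or the $\epsilon$-slack explicitly absorbed), your argument delivers the theorem; your handling of the renormalisation $P'=f\bar P$ via \cref{tvbound} is consistent with the paper.
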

\begin{proof}
Let $\bar P, \bar R$ be the estimates obtained after running the procedure as described in \cref{thm:debound} with samples corresponding to error $\epsilon$ and confidence $1-\delta$. We will bound the error incurred in estimation of the action-values using $\bar P, \bar R$. We have for any $\pi$ by using triangle inequality 
\begin{align}
\label{tring_q}
|Q_{P,R}^\pi(s,a) - Q_{\bar P,\bar R}^\pi(s,a)|&\leq |Q_{P,R}^\pi(s,a)- Q_{\bar P, R}^\pi(s,a)| + |Q_{\bar P, R}^\pi(s,a) - Q_{\bar P,\bar R}^\pi(s,a)|   
\end{align}
where we use the subscript to denote whether actual or approximate values are used for $P,R$ respectively. We first focus on the first term on the RHS of \cref{tring_q}. Let $R_\pi(s_t) = \sum_{a_t} \pi(a_t|s_t) R(s_t,a_t)$. We use $P_{t,\pi}(\cdot|s) = (P_{\pi}(\cdot|s))^t$ to denote the state distribution after $t$ time steps. Consider a horizon $h$ interleaving $Q$ estimate given by:
\begin{align}
Q_h^\pi(s,a) &= R(s_t,a_t)+\sum_{t=1}^{h-1}\gamma^t\mathbb{E}_{\bar P_{t,\pi}(\cdot|s)}[ R_\pi(s_t)]+ \sum_{t=h}^{\infty}\gamma^t\mathbb{E}_{P_{t-h,\pi}(\cdot|s_h)\cdot \bar P_{h,\pi}(s_h|s) }[ R_\pi(s_t)]
\end{align}
Where $s_0=s, a_0=a$ and the first $h$ steps are unrolled according to $\bar P_\pi$, the rest are done using the true transition $P_\pi$. We have that:
\begin{align}
|Q_{P,R}^\pi(s,a) - Q_{\bar P,\bar R}^\pi(s,a)|&=|Q_0^\pi(s,a) - Q_\infty^\pi(s,a)|\leq \sum_{h=0}^\infty |Q_h^\pi(s,a) - Q_{h+1}^\pi(s,a)|
\end{align}
Each term in the RHS of the above can be independently bounded as : 
\begin{align}
|Q_h^\pi(s,a) - Q_{h+1}^\pi(s,a)|=&\gamma^{h+1}\Big|\mathbb{E}_{\bar P_{h+1,\pi}(s_{h+1}|s)}\Big[\sum_{a_{h+1}}\pi(a_{h+1}|s_{h+1})Q_\infty^\pi(s_{h+1}.a_{h+1})\Big]\\&-\mathbb{E}_{P_\pi\bar P_{h,\pi}(s_{h+1}|s)}\Big[\sum_{a_{h+1}}\pi(a_{h+1}|s_{h+1})Q_\infty^\pi(s_{h+1}.a_{h+1})\Big]\Big|
\end{align}
As the rewards are bounded we get the expression above is $\leq \frac{1}{1-\gamma}\gamma^{h+1}TV(\bar P_{\pi}(s',a'|s),P_{\pi}(s',a'|s))$. Finally using \cref{tvbound} we get $\leq (\frac{1}{2}(|1-f|+f|S|\epsilon))\frac{\gamma^{h+1}}{1-\gamma}$. And plugging in the original expression: 
\begin{align}
|Q_{P,R}^\pi(s,a) - Q_{\bar P,\bar R}^\pi(s,a)|\leq (|1-f|+f|S|\epsilon)\frac{\gamma}{2(1-\gamma)^2}
\end{align}
Next the second term on the RHS of \cref{tring_q} can easily be bounded by $\frac{\epsilon}{1-\gamma}$ which gives:
\begin{align}
|Q_{P,R}^\pi(s,a) - Q_{\bar P,\bar R}^\pi(s,a)|&\leq (|1-f|+f|S|\epsilon)\frac{\gamma}{2(1-\gamma)^2}+   \frac{\epsilon}{1-\gamma}
\end{align}
\end{proof}

\vspace{-0.5cm}
\section{Discussion}
\subsection{Relation to other methods}
\label{discussion:vdn}
In this section we study the relationship between \textsc{Tesseract} and some of the existing methods for MARL.

\subsubsection{FQL}
FQL~\citep{chen2018factorized} uses a learnt inner product space to represent the dependence of joint Q-function on pair wise agent interactions. The following result shows containment of FQL representable action-value function by \textsc{Tesseract} : 
\begin{proposition}
The set of joint Q-functions representable by FQL is a subset of that representable by \textsc{Tesseract}.
\end{proposition}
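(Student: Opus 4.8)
Recall that FQL parametrises the joint action-value as a sum of per-agent utilities and of pairwise interaction terms modelled through a shared $d$-dimensional embedding: for each state $s$,
\begin{align}
Q^{\mathrm{FQL}}(s,\mathbf{u}) \;=\; \sum_{i=1}^n q^i(s,u^i) \;+\; \sum_{1\le i<j\le n} c_{ij}\,\big\langle e^i(s,u^i),\, e^j(s,u^j)\big\rangle,
\end{align}
with $q^i(s,\cdot)\in\mathbb{R}$, $e^i(s,\cdot)\in\mathbb{R}^d$, and $c_{ij}$ fixed (mean-field) normalisation constants; a global state-only bias $V(s)$, if present, is absorbed into the $q^i$ terms. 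The plan is to exhibit, for every fixed $s$, an explicit CP decomposition of the order-$n$ tensor $\hat Q^{\mathrm{FQL}}(s)$ (modes indexed by $U$) of small rank, which places $Q^{\mathrm{FQL}}$ inside $\mathcal{Q}_k$ for an explicit $k$, and hence inside the class representable by \textsc{Tesseract} (a fortiori, by completeness of \textsc{Tesseract}).

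First I would treat a single-agent term: viewed as a tensor over $U^n$, the summand $q^i(s,u^i)$ is constant along every mode $\ell\neq i$, so it equals $\big(\bigotimes_{\ell<i}\mathbf{1}\big)\otimes v^i \otimes \big(\bigotimes_{\ell>i}\mathbf{1}\big)$, where $\mathbf{1}\in\mathbb{R}^{|U|}$ is the all-ones vector and $v^i(u)=q^i(s,u)$; this is rank $1$. Next, for a pairwise term, expand $\langle e^i(s,u^i),e^j(s,u^j)\rangle = \sum_{\ell=1}^d e^i_\ell(s,u^i)\,e^j_\ell(s,u^j)$ and, for each $\ell$, place the $\ell$-th coordinate vector on modes $i$ and $j$ and $\mathbf{1}$ on every other mode; this writes the pairwise term as a sum of $d$ rank-$1$ tensors, i.e.\ a tensor of CP-rank at most $d$.

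Summing the $n$ single-agent terms and the $\binom{n}{2}$ pairwise terms and using sub-additivity of CP-rank (the same property invoked in the proof of \cref{rankbQ}) gives $\operatorname{rank}(\hat Q^{\mathrm{FQL}}(s))\le n + d\binom{n}{2}$ for all $s$, hence $Q^{\mathrm{FQL}}\in\mathcal{Q}_k$ with $k=n+d\binom{n}{2}$. Rescaling each rank-$1$ factor to unit $\ell_2$-norm, pulling the scalars ($\|\mathbf{1}\|_2=\sqrt{|U|}$, $\|v^i\|_2$, $c_{ij}$, coordinate-vector norms) into the weights $w_r$, puts this decomposition in exactly the normalised form of \cref{CPD}/\cref{eq:cpa} used by \textsc{Tesseract} (taking the action-representation map $f_\eta$ to be the identity on $\mathbb{R}^{|U|}$). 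Therefore every $Q^{\mathrm{FQL}}$ is representable by a rank-$k$ \textsc{Tesseract} model; the inclusion is moreover proper for $n\ge 3$, since e.g.\ the rank-$1$ tensor $\chi\otimes\chi\otimes\chi$ with $\chi=(1,-1,0,\dots)$ has a nonzero genuine three-way component and so is not expressible in the additive-plus-pairwise FQL form, yet is trivially a rank-$1$ \textsc{Tesseract} function.

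The outer-product/padding bookkeeping is routine; the only points needing real care are pinning down FQL's exact functional form (verifying that mean-field and state-only normalisation constants enter multiplicatively and so do not affect CP-rank) and checking that padding inactive modes with $\mathbf{1}$ and absorbing all norms into the weights $w_r$ is consistent with \textsc{Tesseract}'s unit-norm factor convention — that is the step I would write out in full detail.
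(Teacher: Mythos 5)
Your proof is correct and follows essentially the same route as the paper's: both construct an explicit CP decomposition of rank $n + d\binom{n}{2}$ by padding each per-agent utility term with all-ones vectors on the inactive modes and expanding each pairwise inner product into $d$ rank-one components, then invoke sub-additivity of CP-rank. Your additional normalisation bookkeeping and the explicit counterexample showing the inclusion is proper are fine refinements of what the paper states informally, but they do not change the argument.
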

\begin{proof}
In the most general form, any join Q-function representable by FQL has the form:
\begin{equation}
Q_{fql}(s, \mathbf{u}) = \sum_{i=1:n} q_i(s,u_i) + \sum_{i=1:n, j<i} \langle f_i(s,u_i), f_j(s,u_j) \rangle
\end{equation}
where $q_i: S\times U \to \mathbb{R}$ are individual contributions to joint Q-function and $f_i: S\times U \to \mathbb{R}^d$ are the vectors describing pairwise interactions between the agents. There are $n \choose 2$ pairs of agents to consider for (pairwise)interactions. Let $\mathscr{P} \triangleq {(i,j)}$ be the ordered set of agent pairs where $i>j$ and $i,j \in \{1..n\}$, let $\mathscr{P}_{k}$ denote the $k$th element of $\mathscr{P}$. Define membership function $m:\mathscr{P} \times \{1..n\} \to \{0,1\}$ as: 
\begin{align}
    m((i,j),x) =
    \begin{cases*}
      1     & if $x=i \vee x=j$ \\
      0     & otherwise
    \end{cases*}
\end{align}
Define the mapping $v_i: S \to \mathbb{R}^{|U|\times D}$ where $D = d {n \choose 2}+n$ and $v_{i,k}$ represents the $k$th column of $v_i$. 
\begin{align}
    v_i(s) \triangleq
    \begin{cases*}
      v_i(s)[j, (k-1)d+1:kd] = f_i(s, u_j)     & if $m(\mathscr{P}_{k}, i)=1$ \\
      v_i(s)[j, D-n+i] = q_i(s, u_j)\\
      v_i(s)[j, k] = 1     & otherwise
    \end{cases*}
\end{align}
We get that the tensors:
\begin{equation}
Q_{fql}(s) = \sum_{k=1}^D \otimes^n v_{i,k}(s)
\end{equation}
Thus any $Q_{fql}$ can be represented by \textsc{Tesseract}, note that the converse is not true ie. any arbitrary Q-function representable by \textsc{Tesseract} may not be representable by FQL as FQL cannot model higher-order ($>2$ agent) interactions. 
\end{proof}

\subsubsection{VDN}
VDN~\cite{sunehag_value-decomposition_2017} learns a decentralisable factorisation of the joint action-values by expressing it as a sum of per agent utilities $\hat Q=\oplus^n u_i ,i \in \{1..n\}$. This can be equivalently learnt in \textsc{Tesseract} by finding the best rank one projection of $\exp(\hat Q(s))$. We formalise this in the following result: 
\begin{proposition}
\label{prop:vdn}
For any MMDP, given policy $\pi$ having $Q$ function representable by VDN  ie. $\hat Q^{\pi}(s)=\oplus^n u_i(s) ,i \in \{1..n\}$, $\exists v_i(s) \forall s\in S$, the utility factorization can be recovered from rank one CP-decomposition of $\exp(\hat Q^{\pi})$
\end{proposition}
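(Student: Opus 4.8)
The plan is to observe that the VDN additive structure becomes a rank-one \emph{product} structure after applying $\exp$ entrywise. Writing the VDN ansatz elementwise, $\hat Q^{\pi}(s)_{u^1,\dots,u^n}=\sum_{i=1}^n u_i(s)_{u^i}$, so that
\begin{align}
\exp\!\big(\hat Q^{\pi}(s)\big)_{u^1,\dots,u^n}=\prod_{i=1}^n \exp\!\big(u_i(s)_{u^i}\big)=\Big(\otimes^n a_i(s)\Big)_{u^1,\dots,u^n},\ i\in\{1..n\},
\end{align}
where $a_i(s)\in\mathbb{R}^{|U|}_{>0}$ has entries $a_i(s)_{u^i}=\exp(u_i(s)_{u^i})$. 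Hence $\exp(\hat Q^{\pi}(s))$ is exactly a rank-one tensor with strictly positive entries.

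First I would put this in the canonical form of \cref{CPD} by normalising each factor: set $v_i(s)=a_i(s)/\|a_i(s)\|_2$ (unit norm, entrywise positive) and $w(s)=\prod_{i=1}^n\|a_i(s)\|_2>0$, so that $\exp(\hat Q^{\pi}(s))=w(s)\otimes^n v_i(s)$. Next I would invoke uniqueness of the rank-one CP decomposition: since $\exp(\hat Q^{\pi}(s))$ is genuinely rank one and nonzero, any decomposition into unit-norm factors with a scalar weight coincides with the one above up to the usual ambiguity of simultaneously flipping the signs of an even number of factors; because the tensor is entrywise positive, this ambiguity is removed once the factors are required to have positive entries (one checks that otherwise an even number of factors would be all-negative, and flipping them in pairs restores positivity without changing the tensor). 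Thus the rank-one CP decomposition of $\exp(\hat Q^{\pi}(s))$ returns precisely $w(s)$ and the $v_i(s)$.

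Finally, I would invert the exponential map: taking logarithms entrywise gives $u_i(s)_{u^i}=\log v_i(s)_{u^i}+\beta_i(s)$ for any constants $\beta_i(s)$ with $\sum_{i=1}^n\beta_i(s)=\log w(s)$, which is a valid VDN utility factorisation reconstructed from the CP factors and the weight. The remaining freedom in splitting $\log w(s)$ among the $\beta_i(s)$ is exactly the inherent additive-constant gauge freedom of VDN (moving a constant from one agent's utility to another leaves $\hat Q^{\pi}$ unchanged), which is why the statement only claims \emph{existence} of suitable $v_i(s)$. I expect the uniqueness argument for the rank-one decomposition, together with correctly accounting for the scaling and sign ambiguities, to be the only non-routine step; once the positivity of $\exp(\hat Q^{\pi}(s))$ is exploited, the rest is bookkeeping.
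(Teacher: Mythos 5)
Your proposal is correct and follows essentially the same route as the paper: exponentiate the additive VDN form so that $\exp(\hat Q^{\pi}(s))=\otimes^n \exp(u_i(s))$ is exactly rank one, note that these factors realise the rank-one CP decomposition, and recover utilities by taking logarithms. You additionally spell out the sign/normalisation ambiguities and the additive-constant gauge freedom, which the paper's proof leaves implicit, but the underlying argument is the same.
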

\begin{proof}
We have that :
\begin{align}
\exp(\hat Q^{\pi}(s)) &= \exp(\oplus^n u_i(s))\\
&=\otimes^n \exp(u_i(s))
\end{align}
Thus $(\exp(u_i(s)))_{i=1}^n \in \argmin_{v_i(s)}||\exp(\hat Q^{\pi}(s)) - \otimes^n v_i(s)||_F \forall s \in S$ and there always exist $v_i(s)$ that can be mapped to some $u_i(s)$ via exponentiation.
In general any Q-function that is representable by VDN can be represented by \textsc{Tesseract} under an exponential transform  (\cref{subsec:TessAlgos}).
\end{proof}

\subsection{Injecting Priors for Continuous Domains}
\label{app:cenv}
\begin{figure}[h]
\centering
\includegraphics[width=0.4\linewidth]{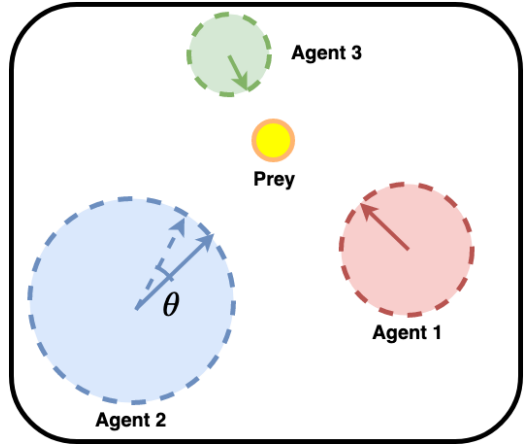}
\caption{Continuous actions task with three agents chasing a prey. Perturbing Agent 2's action direction by small amount $\theta$ leads to a small change in the joint value. \label{fig:perturb}}
\end{figure}

We now discuss the continuous action setting. Since the action set of each agent is infinite, we impose further structure while maintaining appropriate richness in the hypothesis class of the proposed action value functions. Towards this we present an example of a simple prior for \textsc{Tesseract} for continuous action domains. WLOG, let $U\triangleq \mathbb{R}^d$ for each agent $\in {1..n}$. We are now interested in the function class given by $\mathcal{Q}\triangleq \{Q: S\times U^n\to \mathbb{R}\}$ where each $Q(s)$ $\triangleq \langle T(s,\{||u^i||_2\}), \otimes^n u^i\rangle$, here $T(\cdot): S\times \mathbb{R}^n\to \mathbb{R}^{d^n}$ is a function that outputs an order $n$ tensor and is invariant to the direction of the agent actions, $\langle\cdot,\cdot\rangle$ is the dot product between two order $n$ tensors and  $||\cdot||_2$ is the Euclidean norm. Similar to the discrete case, we define 
$\mathcal{Q}_k \triangleq \{Q: Q\in \mathcal{Q} \land rank(T(\cdot)) =k, \forall s \in S\}$. The continuous case subsumes the discrete case with $T(\cdot)\triangleq Q(\cdot)$ and actions encoded as one hot vectors. We typically use rich classes like deep neural nets for $Q$ and $T$ parametrised by $\phi$.

We now briefly discuss the motivation behind the example continuous case formulation: for many real world continuous action tasks the joint payoff is much more sensitive to the magnitude of the actions than their directions, i.e., slightly perturbing the action direction of one agent while keeping others fixed changes the payoff by only a small amount (see \cref{fig:perturb}). Furthermore, $T_\phi$ can be arbitrarily rich and can be seen as representing utility per agent per action dimension, which is precisely the information required by methods for continuous action spaces that perform gradient ascent w.r.t.\ $\nabla_{u^i}Q$ to ensure policy improvement. Further magnitude constraints on actions can be easily handled by a rich enough function class for $T$. Lastly we can further abstract the interactions amongst the agents by learnable maps $f_\eta^i(u^i,s): \mathbb{R}^d \times S \to \mathbb{R}^m$, $m>>d$ and considering classes  $Q(s,\mathbf{u})\triangleq \langle T(s,\{||u^i||\}), \otimes^n f_\eta^i(u^i)\rangle$ where $T(\cdot): S\times \mathbb{R}^n\to \mathbb{R}^{m^n}$. 

\subsection{Additional theoretical discussion}
\label{app:atd}
\subsubsection{Selecting the CP-rank for approximation}
While determining the rank of a fully observed tensor is itself NP-hard \citep{hillar2013most}, we believe we can help alleviate this problem due to two key observations:
\begin{itemize}
    \item The tensors involved in \textsc{Tesseract} capture dependence of transition and reward dynamics on the action space. Thus if we can approximately identify (possibly using expert knowledge) the various aspects in which the actions available at hand affect the environment, we can get a rough idea of the rank to use for approximation.
    \item Our experiments on different domains (\cref{sec:exps}, \cref{app:additional_exp}) provide evidence that even when using a rank insufficient approximation, we can get good empirical performance and sample efficiency. (This is also evidenced by the empirical success of related algorithms like VDN which happen to be specific instances under the \textsc{Tesseract} framework.)
\end{itemize}
\section{Additional experiments and details}
\label{app:additional_exp}

\subsection{StarCraft II}
\label{app:sc2}
\begin{figure}[h]
    \centering
    \includegraphics[width=0.4\linewidth]{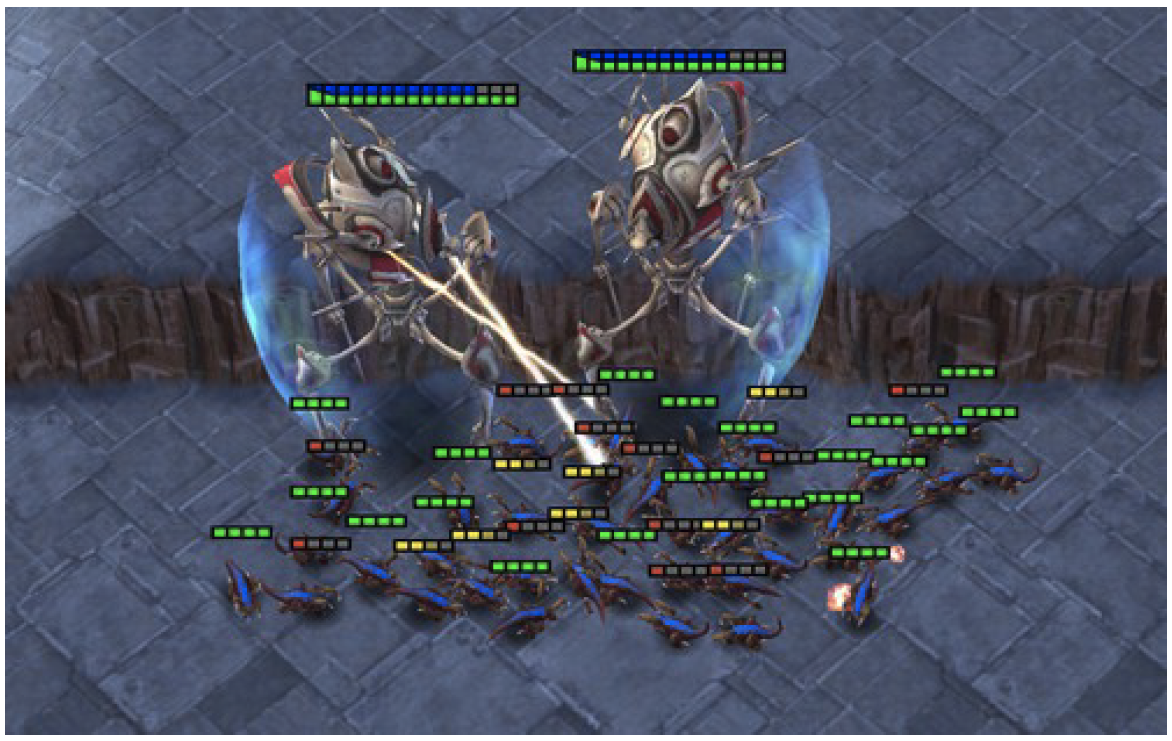}
    \caption{The 2c\_vs\_64zg scenario in SMAC. \label{fig:snap_col}}
\end{figure}

In the SMAC bechmark\citep{samvelyan2019starcraft} (https://github.com/oxwhirl/smac), agents can $\mathtt{move}$ in four cardinal directions, $\mathtt{stop}$, take $\mathtt{noop}$ (do nothing), or select an enemy to $\mathtt{attack}$ at each timestep. Therefore, if there are $n_e$ enemies in the map, the action space for each ally unit contains $n_e + 6$ discrete actions. 
\subsubsection{Additional Experiments}
\label{app:additional_sc2}
\begin{figure*}[h]
	\centering
	\subfigure[3s5z \textbf{Easy}]{
		\includegraphics[width=0.325\linewidth]{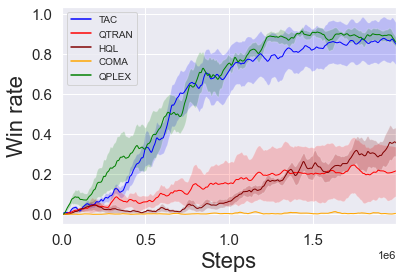}\label{fig:a_3s5z_smac}}
	\subfigure[2s\_vs\_1sc \textbf{Easy}]{
		\includegraphics[width=0.325\linewidth]{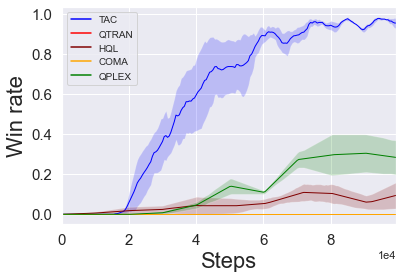}\label{fig:a_2s_vs_1sc}}
	\subfigure[2c\_vs\_64zg \textbf{Hard}]{
		\includegraphics[width=0.325\linewidth]{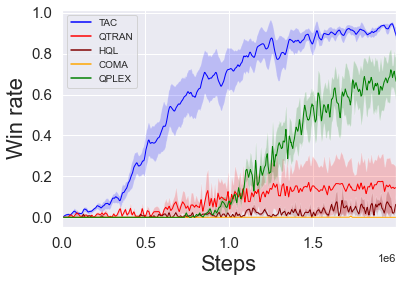}\label{fig:a_2c_vs_64z}}
	\subfigure[5m\_vs\_6m \textbf{Hard}]{
		\includegraphics[width=0.325\linewidth]{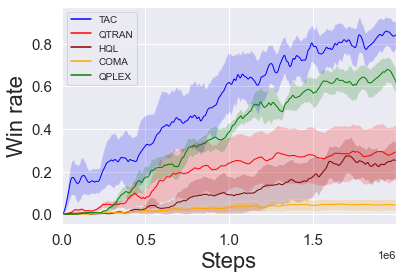}\label{fig:a_5m_vs_6m}}
	\subfigure[MMM2 \textbf{Super Hard}]{
		\includegraphics[width=0.325\linewidth]{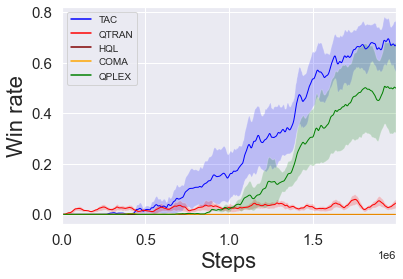}\label{fig:a_MMM2}}
	\subfigure[27m\_vs\_30m \textbf{Super Hard}]{
		\includegraphics[width=0.325\linewidth]{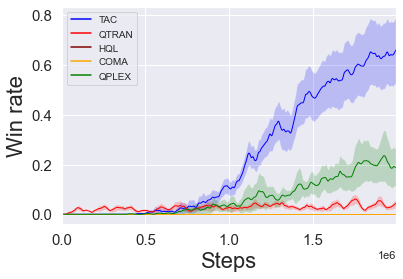}\label{fig:a_27m_vs_30m}}
	\subfigure[6h\_vs\_8z \textbf{Super Hard}]{
		\includegraphics[width=0.325\linewidth]{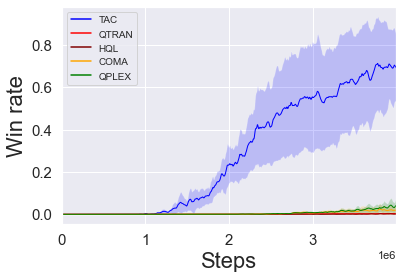}\label{fig:a_6h8z}}
	\subfigure[Corridor \textbf{Super Hard}]{
		\includegraphics[width=0.325\linewidth]{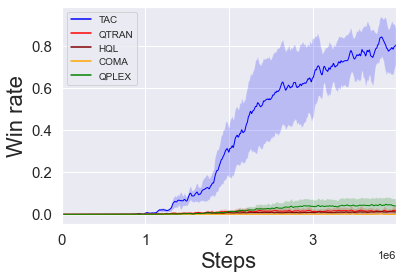}\label{fig:a_corridor}}
	\caption{Performance of different algorithms on different SMAC scenarios: \textcolor{blue}{TAC}, \textcolor{red}{QTRAN}, \textcolor[rgb]{0,0.7,0}{QPLEX}, \textcolor{orange}{COMA}, \textcolor[rgb]{0.76, 0.13, 0.28}{HQL}. \label{fig:additional_sc2}}
\end{figure*}

In addition to the baselines in main text \cref{sec:exps}, we also include 4 more baselines: \textcolor{red}{QTRAN} \citep{son2019qtran}, \textcolor[rgb]{0,0.7,0}{QPLEX} \citep{wang2020qplex}, \textcolor{orange}{COMA} \citep{foerster2018counterfactual} and \textcolor[rgb]{0.76, 0.13, 0.28}{HQL}. QTRAN tries to avoid the issues arising with representational constraints by posing the decentralised multi agent problem as optimisation with linear constraints, these constraints are relaxed using L2 penalties for tractability \citep{mahajan2019maven}. Similarly, QPLEX another recent method uses an alternative formulation using advantages for ensuring the \textit{Individual Global Max} (IGM) principle \citep{son2019qtran}. COMA is an actor-critic method that uses a centralised critic for computing a counterfactual baseline for variance reduction by marginalising across individual agent actions. Finally, HQL uses the heuristic of differential learning rates on top of IQL \citep{tan_multi-agent_1993} to address problems associated with decentralized exploration. \textbf{\cref{fig:additional_sc2} }gives the average win rates of the baselines on different SMAC scenarios across five random runs (with one standard deviation shaded). We observe that \textsc{Tesseract} outperforms the baselines by a large margin on most of the scenarios, especially on the \textbf{super-hard} ones on which the exiting methods struggle, this validates the sample efficiency and representational gains supported by our analysis. We observe that HQL is unable to learn a good policy on most scenarios, this might be due to uncertainty in the bootstrap estimates used for choosing the learning rate that confounds with difficulties arising from non-stationarity. We also observe that COMA does not yield satisfactory performance on any of the scenarios. This is possibly because it does not utilise the underlying tensor structure of the problem and suffers from a \textit{lagging critic}. While QPLEX is able to alleviate the problems arising from relaxing the IGM constraints in QTRAN, it lacks in performance on the \textbf{super-hard} scenarios of Corridor and 6h\_vs\_8z. 

\subsubsection{Experimental Setup for SMAC}
\label{app:setup_sc2}
We use a factor network for the tensorised critic which comprises of a fully connected MLP with two hidden layers of dimensions 64 and 32 respectively and outputs a $r|U|$ dimensional vector. We use an identical policy network for the actors which outputs a $|U|$ dimensional vector and a value network which outputs a scalar state-value baseline $V(s)$. The agent policies are derived using softmax over the policy network output. Similar to previous work \cite{samvelyan2019starcraft}, we use two layer network consisting of a fully-connected layer followed by GRU (of 64-dimensional hidden state) for encoding agent trajectories. We used Relu for non-linearities. All the networks are shared across the agents. We use ADAM as the optimizer with learning rate $5\times10^{-4}$. We use entropy regularisation with scaling coefficient $\beta = 0.005$. We use an approximation rank of $7$ for Tesseract ('TAC') for the SMAC experiments. A batch size of 512 is used for training which is collected across 8 parallel environments (additional setup details in \cref{app:techniques}). Grid search was performed over the hyper-parameters for tuning. 

For the baselines QPLEX, QMIX, QTRAN, VDN, COMA, IQL we use the open sourced code provided by their authors at https://github.com/wjh720/QPLEX and https://github.com/oxwhirl/pymarl respectively which has hyper-parameters tuned for SMAC domain. The choice for architecture make the experimental setup of the neural networks used across all the baselines similar. We use a similar trajectory embedding network as mentioned above for our implementations of HQL and FQL which is followed by a network comprising of a fully connected MLP with two hidden layers of dimensions 64 and 32 respectively. For HQL this network outputs $|U|$ action utilities. For FQL, it outputs  a $|U|+d$ vector: first $|U|$ dimension are used for obtaining the scalar contribution to joint Q-function and rest $d$ are used for computing interactions between agents via inner product. We use ADAM as the optimizer for these two baselines. We use differential learning rates of $\alpha = 1\times10^{-3}, \beta=2\times10^{-4}$ for HQL searched over a grid of $\{1,2,5,10\}\times10^{-3} \times\{1,2,5,10\}\times10^{-4}$. FQL uses the same learning rate $5\times10^{-4}$ with $d = 10$ which was  searched over set $\{5, 10, 15\}$. 

The baselines use $\epsilon-$greedy for exploration with $\epsilon$ annealed from $1.0 \to 0.05$ over 50K steps. For super-hard scenarios in \textbf{SMAC} we extend the anneal time to 400K steps. We use temperature annealing for \textsc{Tesseract} with temperature given by $\tau = \frac{2T}{T+t}$ where $T$ is the total step budget and $t$ is the current step. Similarly we use temperature $\tau = \frac{4T}{T+3t}$ for super-hard \textbf{SMAC} scenarios. The discount factor was set to $0.99$ for all the algorithms. 

Experiment runs take 1-5 days on a Nvidia DGX server depending on the size of the StarCraft scenario.

\subsection{Techniques for stabilising \textsc{Tesseract} critic training for Deep-MARL}
\label{app:techniques}
\begin{itemize}
    \item We used a gradient normalisation of $0.5$. The parameters exclusive to the critic were separately subject to the gradient normalisation, this was done because the ratio of gradient norms for the actor and the critic parameters can vary substantially across training. 
    \item We found that using multi-step bootstrapping substantially reduced target variance for Q-fitting and advantage estimation (we used the advantage based policy gradient $\int_{S} \rho^\pi(s)\int_{\mathbf{U}}\nabla\pi_\theta(\mathbf{u|s})\hat A^{\pi}(s,\mathbf{u}) d\mathbf{u}ds$ \citep{sutton2011reinforcement}) for \textbf{SMAC} experiments. Specifically for horizon T, we used the Q-target as:
    \begin{align}
        &Q_{target,t} = \sum_{k=1}^{T-t} \lambda^{k}g_{t,k} \\
        & g_{t,k} = R_t + \gamma R_{t+1} + ... + \gamma^{k}V(s_{t+k}) 
    \end{align}

    and similarly for value target. Likewise, the generalised advantage is estimated as:
    \begin{align}
        &\hat A_t = \sum_{k=0}^{T-t} (\gamma\lambda)^{k}\delta_{t+k}\\
        &\delta_{t} = R_{t}+ \gamma\hat Q(s_{t+1}, \mathbf{u}_{t+1})-V(s_t)
    \end{align}

    Where $\hat Q$ is the tensor network output and the estimates are normalized by the accumulated powers of $\lambda$. We used $T=64, \gamma = 0.99$ and $\lambda = 0.95$ for the experiments.
    \item The tensor network factors were squashed using a sigmoid for clipping and were scaled by $2.0$ for \textbf{SMAC} experiments. Additionally, we initialised the factors according to $\mathcal{N}(0, 0.01)$ (before applying a sigmoid transform) so that value estimates can be effectively updated without the gradient vanishing.
    \item  Similarly, we used clipping for the action-value estimates $\hat Q$ to prevent very large estimates:
    \begin{align}
    clip(\hat{Q}_t) = min\{\hat{Q}_t, R_{max}\}
    \end{align}
    we used $R_{max}=40$ for the \textbf{SMAC} experiments.
\end{itemize}

\begin{figure*}[h]
	\centering
	\subfigure[Ablation on stabilisation techniques]{
		\includegraphics[width=0.4\linewidth]{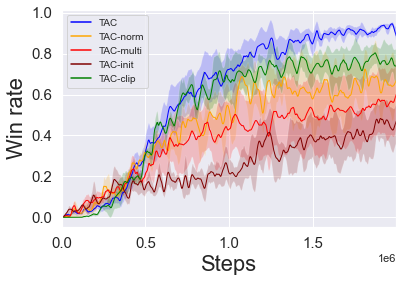}\label{fig:ab_tech}}
	\subfigure[Ablation on rank]{
		\includegraphics[width=0.4\linewidth]{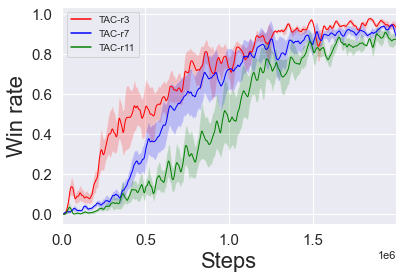}\label{fig:ab_rank}}
	\caption{Variations on \textsc{Tesseract} \label{fig:ablation_sc2}}
\end{figure*}
We provide the ablation results on the stabilisation techniques mentioned above on the 2c\_vs\_64zg scenario in \cref{fig:ab_tech}. The plot lines correspond to the ablations: \textcolor{red}{TAC-multi}: no multi-step target and advantage estimation, \textcolor[rgb]{0,0.7,0}{TAC-clip}: no value upper bounding/clipping, \textcolor{orange}{TAC-norm}: no separate gradient norm, \textcolor[rgb]{0.76, 0.13, 0.28}{TAC-init}: no initialisation and sigmoid squashing of factors. We observe that multi-step estimation of target and advantage plays a very important role in stabilising the training, this is because noisy estimates can adversely update the learn factors towards undesirable fits. Similarly, proper initialisation plays a very important role in learning the Q-tensor as otherwise a larger number of updates might be required for the network to learn the correct factorization, adversely affecting the sample efficiency. Finally we observe that max-clipping and separate gradient normalisation do impact learning, although such effects are relatively mild. 

We also provide the learning curves for \textsc{Tesseract} as the CP rank of Q-approximation is changed, \cref{fig:ab_rank} gives the learning plots as the CP-rank is varied over the set $\{3, 7, 11\}$. Here, we observe that approximation rank makes little impact on the final performance of the algorithm, however it may require more samples in learning the optimal policy. Our PAC analysis \cref{thm:debound} also supports this. 

\subsection{Tensor games:} 
\label{app:tg}
We introduce tensor games for our experimental evaluation. These games generalise the matrix games often used in $2$-player domains. Formally, a tensor game is a cooperative MARL scenario described by tuple $(n,|U|,r)$ that respectively defines the number of agents (dimensions), the number of actions per agent (size of index set) and the rank of the underlying reward tensor \cref{fig:tgg}. Each agent learns a policy for picking a value from the index set corresponding to its dimension. The joint reward is given by the entry  corresponding to the joint action picked by the agents, with the goal of finding the tensor entry corresponding to the maximum reward. We consider the CTDE setting for this game, which makes it additionally challenging. We compare \textsc{Tesseract} (\textcolor{blue}{TAC}) with \textcolor[rgb]{0,0.7,0}{VDN}, \textcolor{red}{QMIX} and independent actor-critic (\textcolor[rgb]{0.76, 0.13, 0.28}{IAC})  trained using Reinforce \cite{sutton2000policy}. Stateless games provide are ideal for isolating the effect of an exponential blowup in the action space. The natural difficulty knobs for stateless games are $|n|$ and $|U|$ which can be increased to obtain environments with  large joint action spaces. Furthermore, as the rank $r$ increases, it becomes increasingly difficult to obtain good approximations for $\hat T$.

 \begin{figure}[h]
    \centering
    \includegraphics[width=0.33\linewidth]{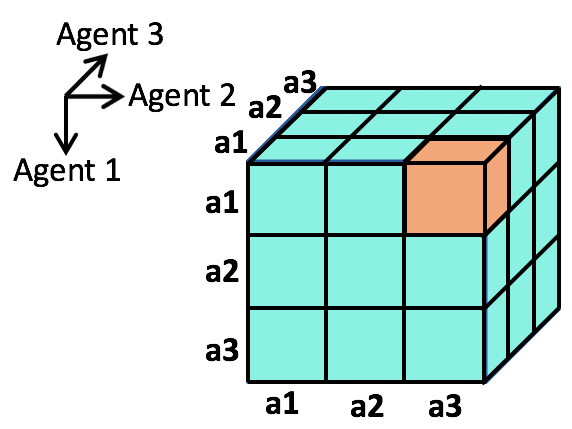}
    \caption{Tensor games example with $3$ agents ($n$) having $3$ actions each ($a$). Optimal joint-action \textbf{(a1, a3, a1)} shown in orange. \label{fig:tgg}}
\end{figure}
\begin{figure}[h]
	\centering
	\subfigure[n:5 |U|:10 r:8]{
		\includegraphics[width=0.23\columnwidth]{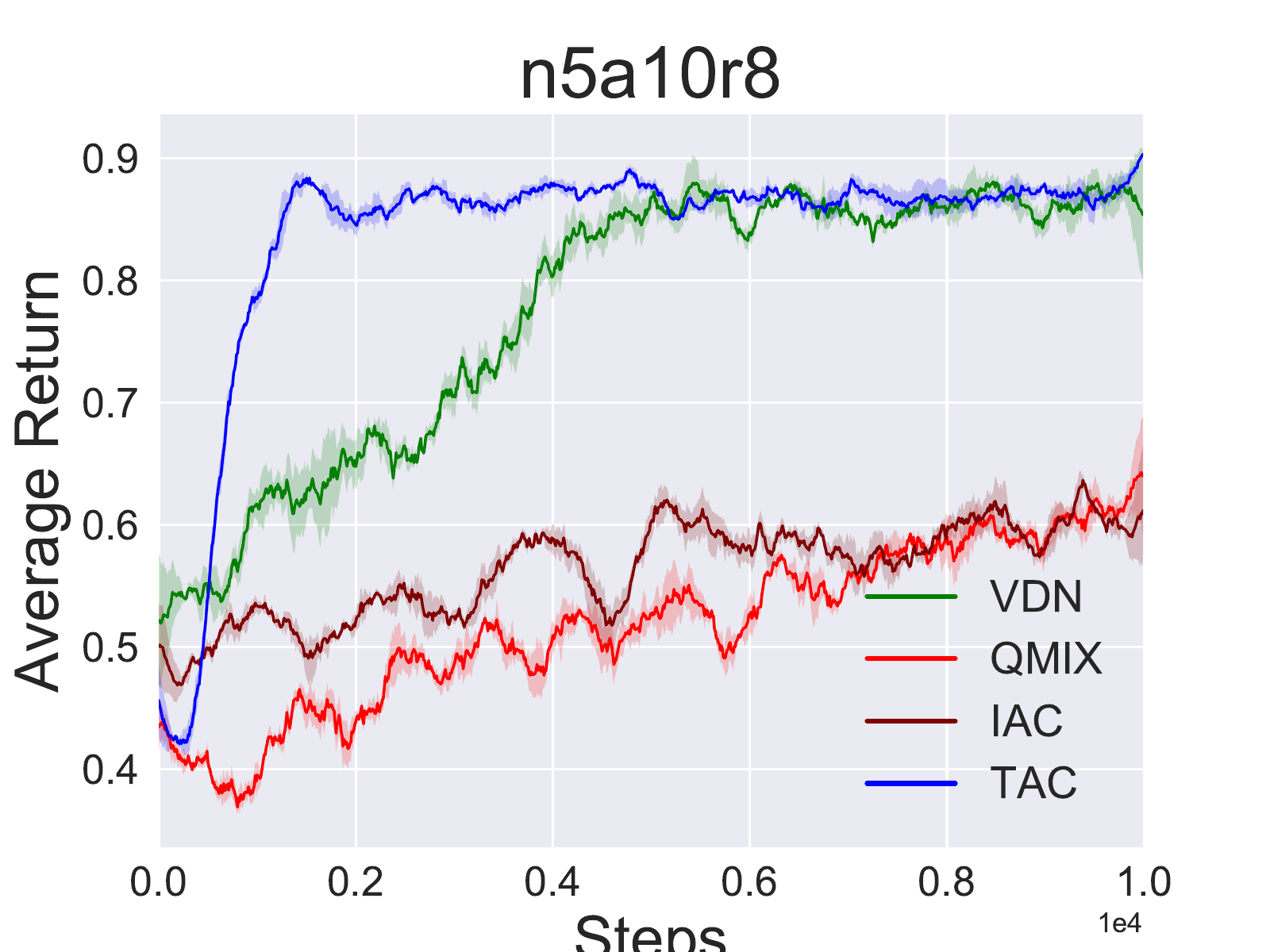}\label{fig:n5a10}}
	\subfigure[n:6 |U|:10 r:8]{
		\includegraphics[width=0.23\columnwidth]{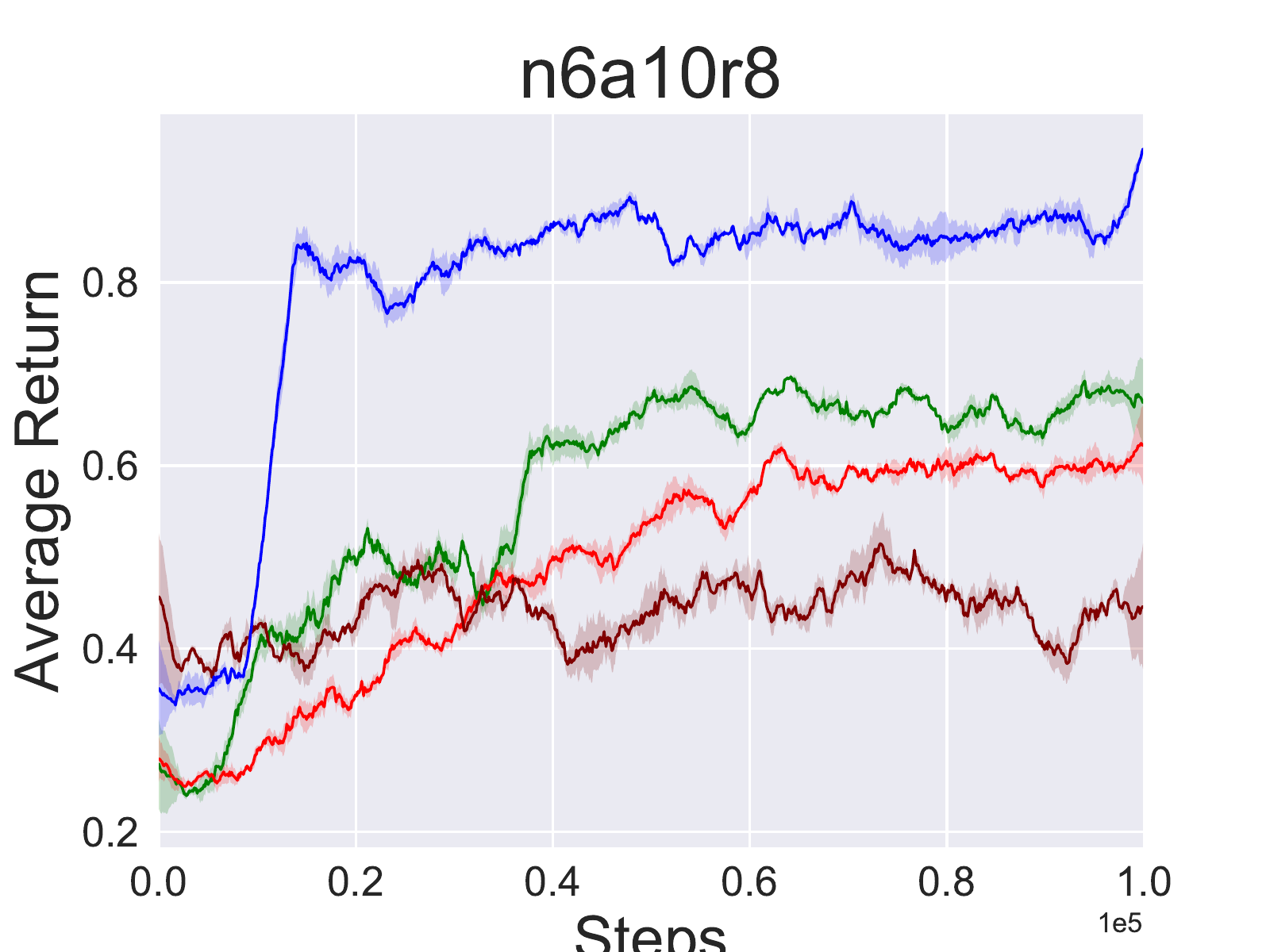}\label{fig:n6a10}}
	\subfigure[Dependence on approximation rank]{
		\includegraphics[width=0.23\columnwidth]{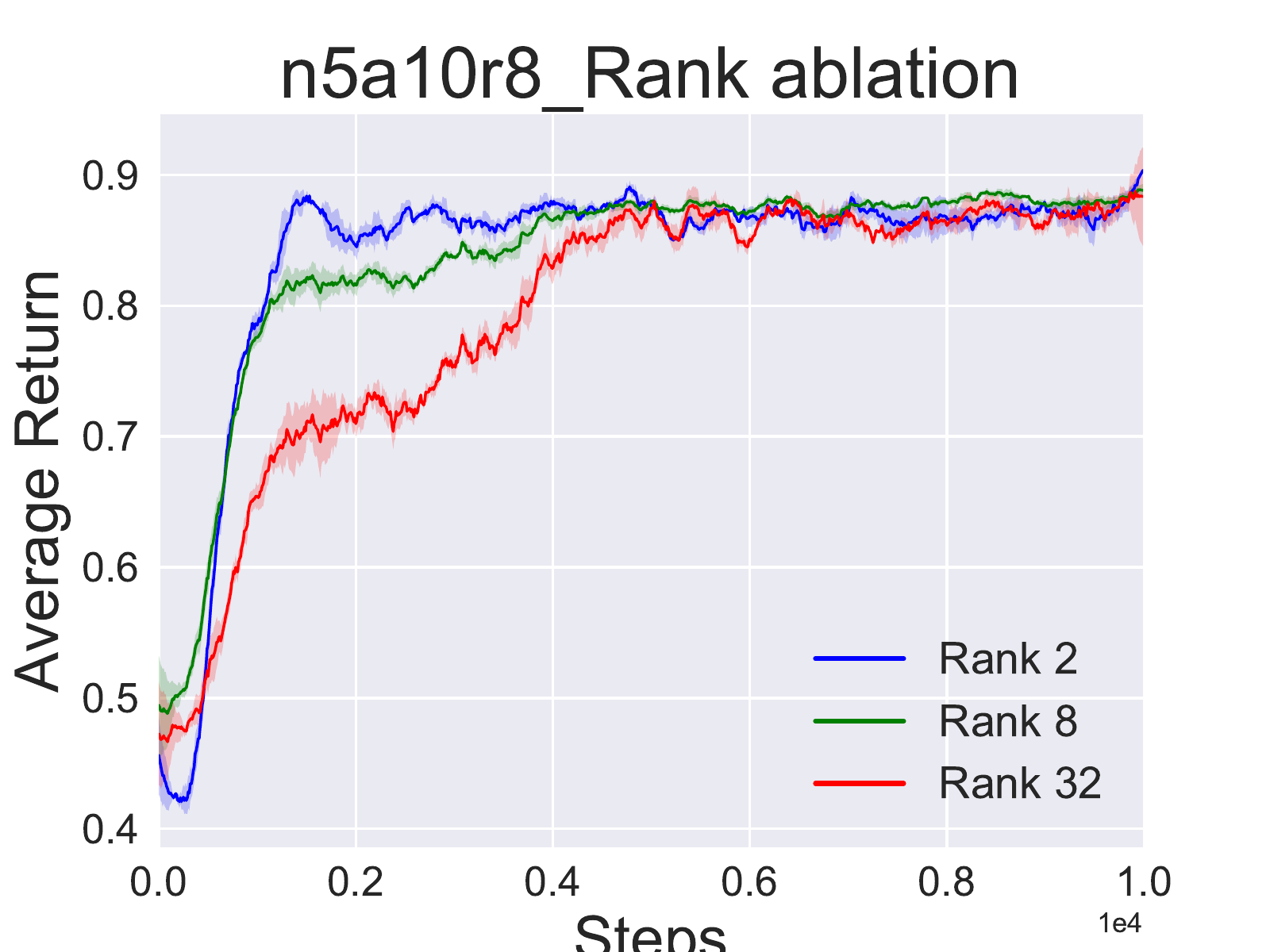}\label{fig:rab}}
	\subfigure[Effects of approximation]{
		\includegraphics[width=0.23\columnwidth]{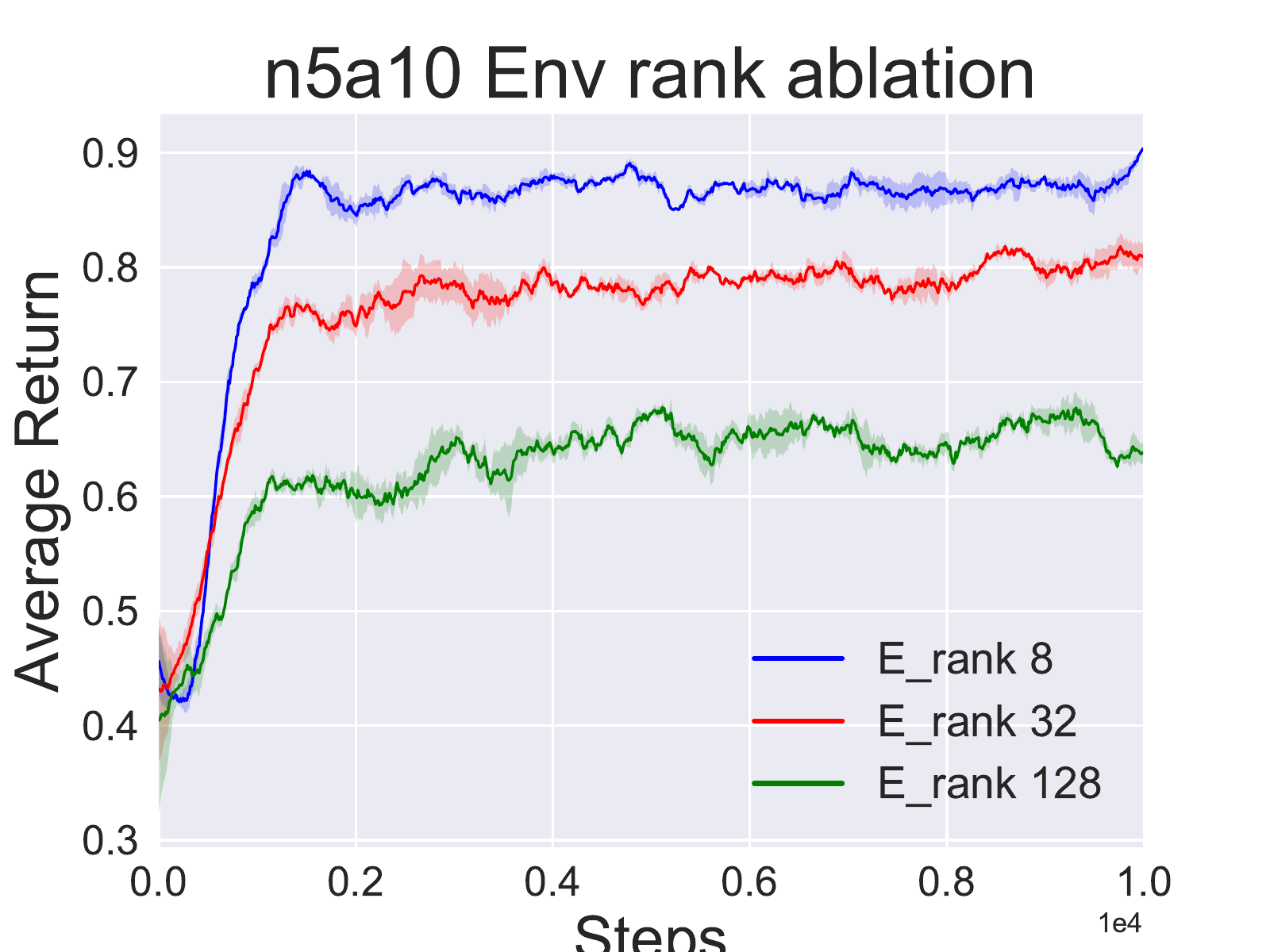}\label{fig:erab}}
	\caption{Experiments on tensor games.}
	\vspace{-0.5cm}
\end{figure}

\cref{fig:n5a10} \cref{fig:n6a10} present the learning curves for the algorithms for two game scenarios, averaged over 5 random runs with game parameters as mentioned in the figures. We observe that \textsc{Tesseract} outperforms the other algorithms in all cases. Moreover, while the other algorithms find it increasingly difficult to learn good policies, \textsc{Tesseract} is less affected by this increase in action space. As opposed to the IAC baseline, \textsc{Tesseract} quickly learns an effective low complexity critic for scaling the policy gradient. QMIX performs worse than VDN due to the additional challenge of learning the mixing network.

In \cref{fig:rab} we study the effects of increasing the approximation rank of Tesseract ($k$ in decomposition $\hat Q(s) \approx T = \sum_{r=1}^k w_r\otimes^n g_{\phi,r}(s^i) ,i \in \{1..n\},$) for a fixed environment with $5$ agents, each having $10$ actions and the environment rank being $8$. While all the three settings learn the optimal policy, it can be observed that the number of samples required to learn a good policy increases as the approximation rank is increased (notice delay in 'Rank 8', 'Rank 32' plot lines). This again is in-line with our PAC results, and makes intuitive sense as a higher rank of approximation directly implies more parameters to learn which increases the samples required to learn. 

We next study how approximation of the actual $Q$ tensors affects learning. In \cref{fig:erab} we compare the performance of using a rank-$2$ \textsc{Tesseract} approximation for environment with $5$ agents, each having $10$ actions and the environment reward tensor rank being varied from $8$ to $128$. We found that for the purpose of finding the optimal policy, \textsc{Tesseract} is fairly stable even when the environment rank is greater than the model approximation rank. However performance may drop if the rank mismatch becomes too large, as can be seen in \cref{fig:erab} for the plot lines 'E\_rank 32', 'E\_rank 128', where the actual rank required to approximate the underlying reward tensor is too high and using just $2$ factors doesn't suffice to accurately represent all the information.  

\subsubsection{Experimental setup for Tensor games}

For tensor game rewards, we sample $k$ linearly independent vectors $u_r^i$ from $|\mathcal{N}(0,1)^{|U|}|$ for each agent dimension $i \in \{1..n\}$. The reward tensor is given by $T=\sum_{r=1}^k w_r\otimes^n u_r^i ,i \in \{1..n\}$. Thus $T$ has roughly $k$ local maxima in general for $k<<|U|^n$. We normalise $\hat T$ so that the maximum entry is always $1$. 

All the agents use feed-forward neural networks with one hidden layer having $64$ units for various components. Relu is used for non-linear activation.

The training uses ADAM \citep{kingma2014adam} as the optimiser with a $L2$ regularisation of $0.001$. The learning rate is set to $0.01$. Training happens after each environment step.

The batch size is set to $32$. For an environment with $n$ agents and $a$ actions available per agent we run the training for $\frac{a^n}{10}$ steps. 

For VDN \citep{sunehag_value-decomposition_2017} and QMIX\citep{rashid2018qmix} the $\epsilon$-greedy coefficient is annealed from $0.9$ to $0.05$ at a linear rate until half of the total steps after which it is kept fixed.

For Tesseract ('TAC') and Independent Actor-Critic ('IAC') we use a learnt state baseline for reducing policy gradient variance. We also add entropy regularisation for the policy with coefficient starting at $0.1$ and halved after every $\frac{1}{10}$ of total steps.

We use an approximation rank of $2$ for Tesseract ('TAC') in all the comparisons except \cref{fig:rab} where it is varied for ablation.

\end{document}